\newtheorem{thm}{Theorem}
\newtheorem{defn}{Definition}
\newtheorem{lemma}{Lemma}
\newtheorem{assume}{Assumption}
\newtheorem{remark}{Remark}
\newenvironment{proof}{Proof:}{\hfill$\square$}
\newcommand{\PP}{\textit{Property P}}
\author{Dachao Lin\thanks{Academy for Advanced Interdisciplinary Studies,
		Peking University.
		\texttt{lindachao@pku.edu.cn}}
	\and
	Ruoyu Sun\thanks{Department of Industrial and Enterprise Engineering, Coordinate Science Lab (affiliated), University of Illinois Urbana{-}Champaign.
		\texttt{ruoyus@illinois.edu}} 
	\and
	Zhihua Zhang\thanks{School of Mathematical Sciences,
		Peking University.
		\texttt{zhzhang@math.pku.edu.cn}}
}
\title{On the Landscape of One-hidden-layer Sparse Networks and Beyond}
\begin{document}

\maketitle

\begin{abstract}
Sparse neural networks have received increasing interest due to their small size compared to dense networks.
Nevertheless, most existing works on neural network theory have focused on dense neural networks, and the understanding of sparse networks is very limited. 
In this paper we study the loss landscape of one-hidden-layer sparse networks.
First, we consider sparse networks with a dense final layer. 
We show that linear networks can have no spurious valleys under special sparse structures, and
non-linear networks could also admit no spurious valleys under a wide final layer. 
Second, we discover that spurious valleys and spurious minima can exist for wide sparse networks with a sparse final layer. 
This is different from wide dense networks which do not have spurious valleys under mild assumptions.  
\end{abstract}

\section{Introduction}
\paragraph{Motivation} Deep neural networks (DNNs) have achieved remarkable empirical successes in the fields of computer vision, speech recognition, and natural language processing, sparking great interests in the theory underlying their architectures and training. 
However, DNNs are typically highly over-parameterized, making them computationally expensive with large amounts of memory and computational power. 
For example, it may take more than 1 week to train ResNet-52 on ImageNet with a single GPU. 
Thus, DNNs are often unsuitable for smaller devices like embedded electronics, sensors and smartphones. 
And there is a pressing demand for techniques to optimize models with reduced model size, faster inference and lower power consumption.

Sparse networks, in which a large subset of the connections are non-existent, are a possible choice for small-size networks. 
It has been shown empirically that sparse networks can achieve performance comparable to dense networks \cite{han2015learning, gale2019state, louizos2017bayesian}. 
In recent years, many sparse networks are obtained from network pruning \cite{zhu2017prune, lee2018snip, liu2018rethinking, frankle2018lottery}.

There are many existing theoretical works on dense neural networks.
One popular theoretical line of research is to analyze the loss surface of neural networks \cite{sun2020global}. 
Although this line does not provide a full description of the algorithm behavior, it serves as a starting point for understanding DNNs.  
For deep linear networks, it is shown that every local minimum is a global minimum \cite{baldi1989neural, kawaguchi2016deep}.
For non-linear networks, researchers have found spurious local minima for various non-linear networks \cite{auer1996exponentially, zhou2017critical, swirszcz2016local, yun2018small, safran2018spurious}.
Despite the existence of suboptimal local minima, researchers discovered that non-linear neural nets can be trained to global minima.
One explanation is that such over-parameterized networks still exhibit nice geometrical landscape properties such as the absence of bad valleys and basins \cite{freeman2019topology, venturi2019spurious, li2018benefit}. 
\citet{nguyen2018loss} also considered sparse networks, showing no spurious ``valleys'' \footnote{There is a slight difference in the definition of spurious valleys in \citet{nguyen2018loss} with common adopted concept from \citet{freeman2019topology, venturi2019spurious}. 
We employ the later one in our paper.} under strictly increasing activations as long as the bias term is not pruned and the width (or the connections) of the final hidden layer is larger than the number of training samples. 
It is not clear whether a more general sparse network (e.g., allowing the final hidden layer to be sparse with few connections) still exhibits the same nice landscape property. 

\paragraph{Main contribution}
Since sparse networks have gained a great success in practice, we may wonder what kind of a sparse network is good for training, or still preserves the good landscape as dense networks?
In this work, we show that some specific sparsity could still provide a benign landscape. But generally, sparsity can deteriorate the loss landscape and create spurious minima and spurious valleys (defined in Definitions \ref{def:sp-minima} and \ref{def:sp-valley}) even on wide networks.
We summarize our constribution in Table~\ref{table:res}.
We say a two-layer neural net to be an SD net if it has a sparse first layer and a dense final layer, and to be a SS net if it has a sparse first layer and a sparse final layer.

\begin{savenotes}
	\begin{table}[t]
		\renewcommand{\arraystretch}{1}
		\centering
		\begin{tabular}{|c|c|c|}
			\hline
			& Spurious valleys for linear nets & Spurious valleys for non-linear nets \\
			\hline
			\ D nets & No (\cite{venturi2019spurious}) & No (\cite{venturi2019spurious, li2018benefit}) \\
			\hline
			SD nets & No (Theorem \ref{thm:two-layer-linear}) & No (\cite{nguyen2018loss,nguyen2019connected} and Theorem \ref{thm:non-linear})  \\
			\hline
			SS nets & Yes (Theorem \ref{thm:local-valley}) & Yes (Theorem \ref{thm:local-valley}) \\
			\hline
		\end{tabular}
		\caption{Summary of our results and comparison of existing results for two-layer linear/non-linear networks. D nets: dense networks; SD nets: sparse networks with a sparse first layer and a dense final layer; SS nets: sparse networks with a sparse first layer and a sparse final layer. 
		Some works adopt analogous definitions instead of valleys. We view all of them as valleys for brevity.
		Additionally, for SD nets, the results of linear networks hold for some special structures, and the results of non-linear networks hold for wide SD nets.}
		\label{table:res}
	\end{table}
\end{savenotes}

\begin{itemize}
    \item For a linear SD net, we show that sparse networks can have a benign landscape as dense linear networks under some special cases (in Theorem \ref{thm:two-layer-linear}). 
    Specifically, we show no spurious valleys if one of the following conditions holds: each sparse pattern is over-parameterized, or data are orthogonal with non-overlapping filters, or the target is scalar. 
    \item For a wide non-linear SD net, we show the consistency between sparse and dense networks (in Theorems \ref{thm:infinite-dim-non-linear} and \ref{thm:non-linear}). We show no spurious valleys for polynomial activations when the hidden layer width is larger than corresponding upper intrinsic dimensions \cite{venturi2019spurious}, and for real analytic activations when the hidden layer width is larger than the number of training samples. This is the same as dense networks.
    \item For a linear/non-linear SS net, we identify the difference between sparse and dense networks by constructively proving the existence of spurious valleys and spurious minima (in Theorem \ref{thm:local-valley}) with common activations.
\end{itemize}

Finally, we conduct numerical experiments under linear/non-linear activations to verify our theoretical results. 
Though we mainly focus on two-layer networks, \textit{we also generalize some results to deep networks.}
For example, we show no spurious valleys for deep linear networks with all sparse layers under scalar output (in Theorem \ref{thm:linear-deep}). 
Moreover, we prove no spurious valleys with non-empty interiors for deep sparse networks with a dense wide final layer (in Theorem \ref{thm:deep-non-linear}). 
In summary, we hope that our work contributes to a better understanding of the landscape of sparse networks and brings insights for future research.

\subsection{Related Work} \label{sec:related}
Our work is strongly related to and inspired by a line of recent work that analyzes the landscape of neural networks.

\paragraph{Landscape of neural networks} 
The landscape analysis of neural networks was a popular topic in the early days of neural net
research; see, e.g., \citet{bianchini1996optimal} for an overview. 
One notable early work \cite{baldi1989neural} proved that shallow linear neural networks do not have bad local minima. 
\citet{kawaguchi2016deep} generalized this result to deep linear neural networks.
However, the situation is more complicated when nonlinear activations are introduced \cite{yun2018small, he2020piecewise,  goldblum2019truth}. 
Many works \cite{zhou2017critical, yun2018small, safran2018spurious, he2020piecewise, ding2019sub} show
that spurious local minima can happen even in a two-layer network with nonlinear activations.
Despite the existence of spurious minima, over-parameterized networks can still exhibit some nice geometrical properties.
\citet{freeman2019topology} considered the topological properties of the sublevel sets. 
\citet{venturi2019spurious} showed the absence of spurious valleys for polynomial activations under a wide hidden layer, and proved the existence of spurious valleys for non-polynomial activation functions if the hidden layer is narrow. 
\citet{nguyen2019connected} showed that a pyramidal DNN has no spurious ``valleys'' for one wide layer and strictly monotonic activations.
Moreover, \citet{li2018benefit} used a different notion named ``spurious basin'' to show that deep fully connected networks with any continuous activation admit no spurious basins.
\citet{nguyen2018loss} also showed no spurious ``valleys'' defined by strict sublevel sets for general sparse networks with plenty of connections to each output neuron and strictly monotonic activations.

Some works \cite{mehta2021loss, kunin2019loss} also focus on the effect of regularization in deep linear networks.
\citet{mehta2021loss} employed novel algebraic symmetries that result in ``flat'' critical manifolds, and analyzed how $\ell_2$-regularization breaks these symmetries to produce isolated critical points.
In this work, we do not apply regularization, because linear networks and wide non-linear networks already have a benign landscape based on previous work.  We consider using as little as possible in the component of the loss function to see the influence of sparsity. 
Moreover, \citet{mehta2021loss} did not distinguish between spurious minima and global minima, which we think the theory still needs to handle.
  
\paragraph{Sparse networks}
Sparse networks \cite{han2015learning, zhu2017prune, liu2018rethinking,frankle2018lottery, han2016deep} have a long history,
and they have gained many recent interests due to their potential in on-device AI (i.e., deploying AI models on small devices).
In practice, sparse networks are mainly produced by the network pruning technique \cite{han2015learning, gale2019state, lee2018snip, liu2018rethinking, frankle2018lottery, yu2018nisp, he2017channel, mozer1989using, morcos2019one, luo2017thinet}, to name a few. 
There are many variants of pruning methods and surveys of this literature, e.g., \citet{hoefler2021sparsity, mishra2020survey}.
Pruning can lead to a reduction in storage and model runtime. 
The performance is usually maintained by retraining the pruned network. 

Most research efforts are spent on the empirical aspects. 
\citet{frankle2018lottery} recommended reusing the sparsity pattern found through pruning and training a sparse network from the same initialization as the original training (i.e., ``lottery'') to obtain comparable performance and avoid a bad solution. 
Moreover, several recent works also give abundant methods for choosing weights or sparse network structures while achieving comparable performance \cite{lee2018snip, molchanov2017variational, louizos2018learning, carreira2018learning,liu2022the}.
There are some theoretical investigations on representation
power. Several works \cite{malach2020proving, pensia2020optimal} proved that an over-parameterized neural network contains a subnetwork with roughly the same accuracy as the target network, providing guarantees for existence of ``good'' sparse subnetworks.

The closest work to ours is probably \citet{evci2019difficulty}, who showed empirically that bad local minima can appear in a sparse network, but this work did not provide any theoretical result. 
Our work provides a partial theoretical justification for this phenomenon: we show that spurious valleys can exist in a sparse network.

\subsection{Organization}
This article is organized as follows. 
We formally define our setting and notation in Section \ref{sec:notation}. 
Then we begin with the analysis for linear SD nets in Section \ref{section:general-linear}.
And we shed light on non-linear SD nets in Section \ref{sec:non-linear}.
In Section \ref{subsec:conter}, we give a counterexample for SS nets.
We verify our theorems through experiments in Section \ref{sec:exp}. 
Finally, we report conclusion in Section \ref{sec:conclusion}.
\section{Preliminaries}\label{sec:notation}
\paragraph{Notation} We use bold-faced letters (e.g., $\bm{w}$ and $\bm{a}$) to denote vectors, and  capital letters (e.g., ${W}=[w_{i j}]$ and ${A}=[a_{i j}]$) for matrices. 
We sometimes use $(W)_{i, \cdot}$ and $(W)_{\cdot, j}$ as the $i$-th row and $j$-th column of a matrix ${W}$, and $(\bm{a})_i$ as the $i$-th entry of a vector $\bm{a}$, if no ambiguity.
We denote $\widetilde{W}$ as a sparse matrix of $W$ (which will be illustrated in detail), and $W^+$ as the pseudoinverse matrix of $W$.
We abbreviate 
$[n]=\{1, \ldots, n\}$. We let $\bm{e}_i$ as the standard $i$-th unit vector, $\bm{1}_n$ and $\bm{0}_n$ respectively denote the all-ones and all-zeros vectors in $\mathbb{R}^n$.
The range of a real-value function $\sigma(\cdot)$ is $\sigma(\mathbb{R}) = \{\sigma(x): x \in \mathbb{R}\}$, and the number of elements in a set $\mathcal{S}$ is $|\mathcal{S}|$. We denote $\mathrm{span}(\mathcal{S})$ as the linear span (or the linear hull) of a set $\mathcal{S}$ of vectors.

\paragraph{Problem setup} Given training samples  $\{(\bm{x}_i,\bm{y}_i)\}_{i=1}^n\subset \mathbb{R}^{d_x}\times \mathbb{R}^{d_y}$,
we form the data matrices $X=[\bm{x}_1, \ldots, \bm{x}_n] \in \mathbb{R}^{d_x \times n}$ and $Y =[\bm{y}_1, \ldots, \bm{y}_n] \in \mathbb{R}^{d_y \times n}$, respectively. 
We mainly focus on one-hidden-layer neural networks with squared loss, while some results can also be applied to general convex loss.
Then the objective is \footnote{Adding bias is equivalent to adding a row of vector $\bm{1}_n^\top$ to $X$ and our setting also includes sparse bias in the first layer. Hence we do not distinguish bias terms in the subsequent content.}
\begin{equation}\label{eq loss func}
\min_{U, W} \; L(U, W) := \frac{1}{2}\left\|U\sigma(W X)-Y\right\|_F^2,
\end{equation}
where $W =[\bm{w}_1, \ldots, \bm{w}_p]^\top \in \mathbb{R}^{p\times d_x}$, $U=[\bm{u}_1, \ldots, \bm{u}_p] \in \mathbb{R}^{d_y \times p}$. Here $ p $ represents the width of the hidden layer and $\sigma(\cdot)$ is a continuous element-wise activation function.

Generally, when applied to sparse patterns after weight pruning or masking, many weights become zero and would not be updated in training.
Our goal is to study the landscape of sparse structures after weight pruning. 
Thus, we pay much attention to the sparse structures, rather than the underlying pruning methods.

We begin with the SD nets (sparse-dense networks). We denote each pattern for each hidden weight $\bm{w}_i$ as $\bm{m}_i\in\{0,1\}^{d_x}, i \in [p]$ and the first layer weights have totally $s$ different patterns $\bm{m}_1^*, \ldots, \bm{m}_s^* \neq \bm{0}$. 
We recombine weights and data with the same pattern as $\mathcal{S}_i=\{j:\bm{m}_j=\bm{m}_i^*\}, i\in[s]$. 
Then the objective of the sparse one-hidden-layer network becomes
\begin{equation} \label{eq:objective}
\min_{U, \widetilde{W}} L(U, \widetilde{W}) := \frac{1}{2}\left\|\sum_{i=1}^{s} U_i\sigma(W_i Z_i)-Y \right\|_F^2 = \frac{1}{2}\left\|\left(U_1, \ldots, U_s \right) \sigma \left[\left(\begin{smallmatrix}
W_1 & \cdots & \bm{0} \\
\vdots & \ddots & \vdots \\
\bm{0} & \cdots & W_s \\
\end{smallmatrix}\right)
\left(\begin{smallmatrix}
Z_1\\ \vdots \\ Z_s 
\end{smallmatrix}\right)\right]-Y \right\|_F^2,
\end{equation}
where we view sparse hidden-layer weight as a block diagonal matrix with elements $W_i = [\bm{w}_{j}, j\in\mathcal{S}_i]^\top \in \mathbb{R}^{p_i\times d_i}$, and we also duplicate and rearrange input data as $Z_i = [X_{j,\cdot}, (\bm{m}^*_i)_j=1] \in \mathbb{R}^{d_i\times n}$, and separating $U$ as $U_i = [\bm{u}_{j}, j\in\mathcal{S}_i]^\top \in \mathbb{R}^{d_y \times p_i}$. Here $i \in [s], p_i = |\mathcal{S}_i|, d_i=\bm{1}_{d_x}^\top\bm{m}^*_i$, $p=\sum_{i=1}^{s}p_i$ (see Figure \ref{fig:sketchmap} for example).

\begin{figure}[!t]
	\centering
	\includegraphics[width=0.7\linewidth]{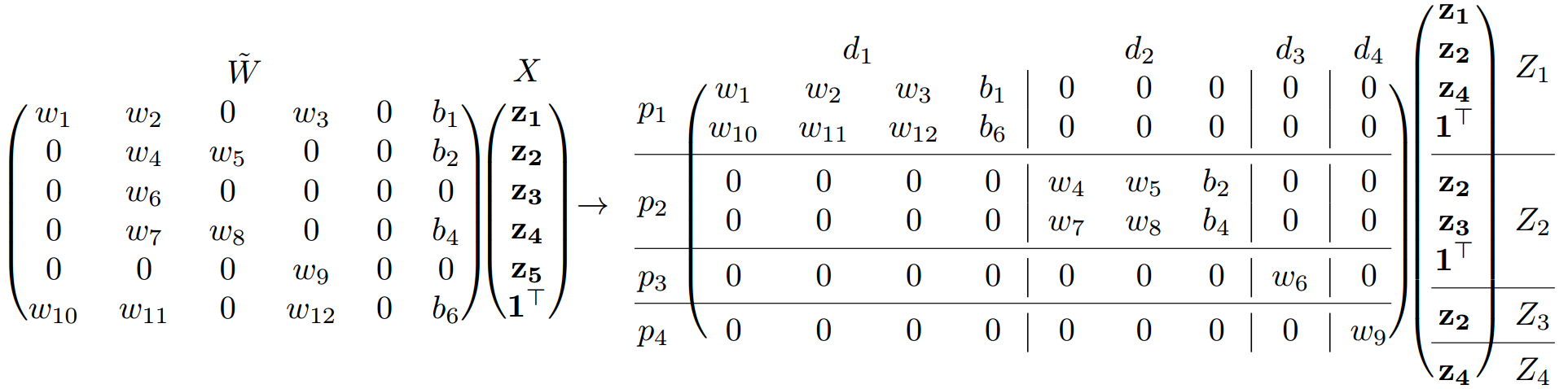}
	\caption{An example of sparse weight transformation in Eq.~\eqref{eq:objective}.}
	\label{fig:sketchmap}
\end{figure}

Note that there may be useless connections and nodes, such as a node with zero out-degree which can be retrieved and excluded from the final layer to the first layer, and other cases are shown in \ref{app:useless-connect}. Thus we do not consider them and assume the sparse structure is \textit{effective} by Definition \ref{defn:eff}, meaning that each neuron has a potential contribution to the network output.

\begin{defn}[\textit{Effective neuron and sparse network}]\label{defn:eff}
	A neuron is effective if it appears at least in one directed path from one input node to one output node.
	A sparse network is effective if each neuron (including input and output neurons) is effective.
\end{defn}

Previous works mainly pay attention to local minima (Definition \ref{def:sp-minima}) and valleys (Definition \ref{def:sp-valley}) for describing the landscape of neural networks. No spurious valleys imply the absence of spurious strict minima \cite{freeman2019topology,venturi2019spurious}. 
Local search methods may get stuck in a spurious valley as well as a spurious strict minimum. 
Thus, understanding the existence of such 
objectives in the landscape would help understand
the difficulty of training (sparse) neural networks.

\begin{defn}[\textbf{Spurious minimum}]\label{def:sp-minima}
	We say a point to be a spurious (strict) minimum if it is a (strict) local minimum but not a global minimum. 
	Here, $f(\bm{\theta}_0)$ is a local minimum, that is, there exists $\epsilon > 0$ such that $f(\bm{\theta}_0) \leq f(\bm{\theta}),\forall \bm{\theta} \colon \left\|\bm{\theta}-\bm{\theta}_0\right\| \leq \epsilon$. And $f(\bm{\theta}_0)$ is a strict local minimum, that is, there exists $\epsilon > 0$ such that $f(\bm{\theta}_0) < f(\bm{\theta}),\forall \bm{\theta} \colon \left\|\bm{\theta}-\bm{\theta}_0\right\| \leq \epsilon, \bm{\theta} \neq \bm{\theta}_0$.
\end{defn}

\begin{defn}[\textbf{Spurious valley}]\label{def:sp-valley}
	Given an $ \alpha \in\mathbb{R} $, we define the $\alpha$-sublevel set of $f(\bm{\theta})$ as $\Omega_f(\alpha) = \{\bm{\theta}\in \mathrm{dom}(f) \colon f(\bm{\theta}) \leq \alpha \}$. A spurious valley $\mathcal{T}$ is a connected component of a sublevel set $\Omega_f(\alpha)$ which can not approach the infimum of $f(\bm{\theta})$, that is, $\inf_{\bm{\theta} \in \mathcal{T}} f(\bm{\theta}) > \inf_{\bm{\theta} \in \mathrm{dom}(f)} f(\bm{\theta})$.
\end{defn}

\section{Sparse-dense Linear Networks} \label{section:general-linear}
We begin with linear activation ($\sigma(z)=z$) as a warm-up case to understand the landscape of neural networks. 
Previous works have proven that there are no spurious minima \cite{kawaguchi2016deep,lu2017depth} and no spurious valleys \cite{venturi2019spurious} for dense linear networks.
We will inherit their descriptions of landscape to show that under certain conditions, we could guarantee no spurious valleys for SD linear networks.

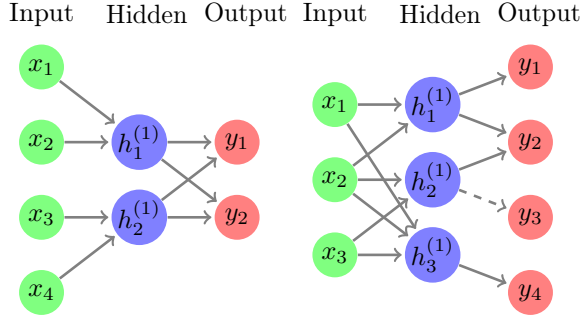
\begin{figure}[!t]
	\centering
	\def\layersep{1.3cm}
	\begin{tikzpicture}[shorten >=1pt,->,draw=black!50, node distance=\layersep]
	\tikzstyle{every pin edge}=[<-,shorten <=1pt]
	\tikzstyle{neural}=[circle,fill=black!25,minimum size=17pt,inner sep=0pt]
	\tikzstyle{input neural}=[neural, fill=green!50];
	\tikzstyle{output neural}=[neural, fill=red!50];
	\tikzstyle{hidden neural}=[neural, fill=blue!50];
	\tikzstyle{annot} = [text width=2.5em, text centered]
	\foreach \name / \y in {1,...,4}
	\node[input neural] (I-\name) at (0,-\y) {$x_{\y}$};
	\foreach \name / \y in {1,...,2}
	\path[yshift=-1cm]
	node[hidden neural] (H-\name) at (\layersep,-\y cm) {$h^{(1)}_{\y}$};
	\foreach \name / \y in {1,2}
	\path[yshift=-1cm]
	node[output neural] (O-\name) at (2*\layersep,-\y cm) {$y_{\y}$};
	\foreach \source in {1,2}
	\foreach \dest in {1}
	\path (I-\source) edge [line width=1pt](H-\dest);
	\foreach \source in {3,4}
	\foreach \dest in {2}
	\path (I-\source) edge [line width=1pt](H-\dest);
	\foreach \source in {1,2}
	\foreach \dest in {1,2}
	\path (H-\source) edge [line width=1pt] (O-\dest);
	\node[annot,above of=H-1, node distance=1.7cm] (hl) {Hidden};
	\node[annot,left of=hl] {Input};
	\node[annot,right of=hl] {Output};
	\foreach \name / \y in {1,...,3}
	\path[yshift=-0.5cm]
	node[input neural] (I-\name) at (3*\layersep,-\y) {$x_{\y}$};
	\foreach \name / \y in {1,...,3}
	\path[yshift=-0.5cm]
	node[hidden neural] (H-\name) at (4*\layersep,-\y cm) {$h^{(1)}_{\y}$};
	\foreach \name / \y in {1,...,4}
	\path[yshift=0cm]
	node[output neural] (O-\name) at (5*\layersep,-\y cm) {$y_{\y}$};
	\path (I-1) edge [line width=1pt](H-1);
	\path (I-2) edge [line width=1pt](H-1);
	\path (I-2) edge [line width=1pt](H-2);
	\path (I-3) edge [line width=1pt](H-2);
	\path (I-1) edge [line width=1pt](H-3);
	\path (I-2) edge [line width=1pt](H-3);
	\path (I-3) edge [line width=1pt](H-3);
	\path (H-1) edge [line width=1pt](O-1);
	\path (H-1) edge [line width=1pt](O-2);
	\path (H-2) edge [line width=1pt](O-2);
	\path (H-2) edge [line width=1pt, dashed](O-3);	
	\path (H-3) edge [line width=1pt](O-4);	
	\node[annot,above of=H-1, node distance=1.2cm] (hl) {Hidden};
	\node[annot,left of=hl] {Input};
	\node[annot,right of=hl] {Output};
	\end{tikzpicture}
	\caption{Sparse network without (left) / with (right) overlapping filters in the first layer.} \label{fig:1}
\end{figure}

\begin{thm}\label{thm:two-layer-linear}
	Under \textbf{either} of the following conditions, any \textit{effective} SD linear network has no spurious valleys:
	1) $p_i \geq d_i, \forall i \in [s]$;
	2) $Z_i Z_j^\top = \bm{0}, \forall i \neq j \in [s]$;
	3) $d_y=1$.
\end{thm}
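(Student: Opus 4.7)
The plan is to handle each of the three sufficient conditions separately, since each admits a different reduction strategy, and the hardest case is Condition~1.

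For Condition~3 ($d_y=1$), each block product $U_iW_i$ is a row vector in $\mathbb{R}^{1\times d_i}$; since effectiveness guarantees $p_i\ge 1$, any target row vector $v$ is realized by $U_i=\bm{e}_1^\top$ and $W_i$ whose first row equals $v$. Hence the image of the factorization map over this block is all of $\mathbb{R}^{1\times d_i}$, and Condition~3 reduces to the same over-parameterization analysis as Condition~1.

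For Condition~2 ($Z_iZ_j^\top=\bm{0}$ for $i\neq j$), the orthogonality annihilates the cross terms, because $\langle U_iW_iZ_i, U_jW_jZ_j\rangle_F = \mathrm{tr}(W_i^\top U_i^\top U_jW_j\,Z_jZ_i^\top)=0$. Expanding the squared loss gives
\[
L(U,W) = \sum_{i=1}^s \tfrac{1}{2}\bigl\|U_iW_iZ_i - Y\bigr\|_F^2 \;-\; \tfrac{s-1}{2}\|Y\|_F^2,
\]
a sum of independent dense two-layer linear network losses $\ell_i(U_i,W_i)$ over disjoint parameter blocks, plus a constant. Any local minimum of $L$ is a local minimum of each $\ell_i$ individually; invoking the classical Baldi--Hornik / Kawaguchi result that dense shallow linear networks have no spurious local minima, every $\ell_i$ is at its global minimum, and hence so is $L$.

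For Condition~1 ($p_i\ge d_i$), set $M_i:=U_iW_i\in\mathbb{R}^{d_y\times d_i}$. Over-parameterization makes the factorization map $\phi:(U,W)\mapsto(M_i)_{i=1}^s$ surjective onto $\prod_i\mathbb{R}^{d_y\times d_i}$, so $L = \tilde L\circ\phi$ for the convex quadratic $\tilde L(M)=\tfrac{1}{2}\|\sum_i M_iZ_i - Y\|_F^2$. It suffices to show any local min of $L$ corresponds to a global min of $\tilde L$. At $(U^*,W^*)$ where every $W_i^*$ has full column rank, the differential $d\phi_i(dU,dW) = dU\,W_i^* + U_i^*\,dW$ is surjective via its first term alone, so $\phi$ is a local submersion and the conclusion follows by convex analysis. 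For rank-deficient $W_i^*$, we exploit the gauge symmetry $(U_i,W_i)\mapsto(U_iR_i,R_i^{-1}W_i)$, $R_i\in\mathrm{GL}(p_i)$, together with a small perturbation to move to a full-column-rank representative without increasing the loss, reducing to the previous case.

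The main obstacle is the rank-degeneracy bookkeeping in Condition~1: one must construct the gauge-plus-perturbation step so that the loss is preserved (or decreased) while the rank of $W_i^*$ is restored, respecting the block-diagonal sparsity structure. Conditions~2 and~3 by contrast reduce transparently to either a classical landscape result (Condition~2) or the same over-parameterized factorization argument (Condition~3).
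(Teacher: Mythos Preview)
Your Condition~2 argument is essentially the paper's: expand the square, use $Z_iZ_j^\top=\bm{0}$ to kill cross terms, and reduce to $s$ independent dense two-layer linear problems. For Conditions~1 and~3 you take a different route (local-minimum analysis via submersion of $\phi$ and convexity of $\tilde L$) from the paper, which instead proves the stronger Property~P by explicit path construction: when $\mathrm{rank}(W_i)=r<d_i$, Lemma~\ref{lemma:zero-path} lets one replace $U_i$ by a $U_i^0$ with $p_i-r$ zero columns along a constant-loss linear path, then fill in the corresponding rows of $W_i$ to restore full column rank (still constant loss), and finally take a convex linear path in $U$ to the least-squares optimum. For Condition~3 the paper does not reduce to Condition~1 but argues directly: swap any $u_i=0$ to $u_i=1,\ \bm{w}_i=\bm{0}$ (constant loss), then use convexity of the loss in the $\bm{w}_i$'s once all $u_i\neq 0$.

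Your rank-deficient case has a genuine gap, not just bookkeeping. The gauge action $(U_i,W_i)\mapsto(U_iR_i,R_i^{-1}W_i)$ preserves $\mathrm{rank}(W_i)$ identically, so no gauge move restores full column rank; a rank-raising perturbation $W_i\to W_i^\epsilon$ changes the product $U_iW_i$, and compensating via $U_i\mapsto U_iW_i\bigl((W_i^\epsilon)^\top W_i^\epsilon\bigr)^{-1}(W_i^\epsilon)^\top$ is not a small move because $(W_i^\epsilon)^\top W_i^\epsilon$ degenerates as $W_i^\epsilon\to W_i$. Even granting a constant-loss path to some full-rank $(U',W')$ (which Lemma~\ref{lemma:zero-path} does supply), that endpoint need not be a local minimum---a constant-loss path can leave a basin---so the submersion conclusion at $(U',W')$ does not transfer back to $(U^*,W^*)$. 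The paper sidesteps all three issues by abandoning the local-minimum framing and building the non-increasing path to the global minimum directly. Since your Condition~3 is a reduction to Condition~1, it inherits the same gap.
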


The proof of Theorem \ref{thm:two-layer-linear} leaves in \ref{app:shallow-linear}. 
We prove the results by constructing a non-increasing path to approach the infimum of the loss function (i.e., the zero loss).
That is, we show {\PP} below under the conditions in Theorem \ref{thm:two-layer-linear}. Here we use the fact that {\PP} implies the absence of spurious valleys \cite[Lemma 2]{venturi2019spurious}.

\begin{defn}[\PP]
	We say the landscape of an objective $f(\bm{\theta})$ satisfies {\PP} if there exists a non-increasing continuous path from any initialization to approach the infimum. 
	Formally, given any initial parameter $\bm{\theta}_0 \in \mathrm{dom}(f)$, there exists a continuous path $ \bm{\theta} \colon t \in [0, 1) \to \bm{\theta}(t) \in \mathrm{dom}(f)$
	such that:
	(a) $\bm{\theta}(0) = \bm{\theta}_0$; (b) $\lim_{t \to 1}f(\bm{\theta}(t)) = \inf_{\bm{\theta}} f(\bm{\theta})$; (c) the function $f(\bm{\theta}(t)) $ in $t \in [0, 1)$ is non-increasing.
\end{defn}

The first condition in Theorem \ref{thm:two-layer-linear} can be viewed as over-parameterization in the sparse networks for each sparse pattern, and the second condition includes two parts: the patterns are non-overlapping (as the left graph of Figure \ref{fig:1} depicts) and the input data is orthogonal, showing that structured training data matched with a sparse network can preserve the benign landscape. 
The third condition states that the output dimension is one.
From Theorem \ref{thm:two-layer-linear}, we could see some special sparse linear networks still have benign landscapes as dense linear networks, which motivates researchers to discover or design efficient sparse networks.  

\subsection{Extensions} \label{sec:exten-linear}
We generalize Theorem \ref{thm:two-layer-linear} to deep linear networks combined with previous works, and the proof can be found in \ref{app:ext-linear}. 
The intuition is that deep linear networks have similar landscape properties as the shallow case \cite{yun2018small, lu2017depth}. 
However, except for the scalar output case which we have solved, understanding the landscape of an arbitrary deep sparse network is still complicated. 
 
\begin{thm}\label{thm:linear-deep}
	Under \textbf{either} of the following conditions, any \textit{effective} deep linear neural network with a sparse first layer does not have spurious valleys:
    1) $p_i \geq d_i, \forall i \in [s]$. 2) $d_y=1$. Moreover, the result under assumption 2) holds for sparse networks with \textbf{all} sparse layers.
\end{thm}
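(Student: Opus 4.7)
The goal is to establish \PP{} (existence of a non-increasing continuous path from any initialization to the infimum of $L$) in each of the two regimes, exactly paralleling how Theorem~\ref{thm:two-layer-linear} is established in the shallow setting. Since \PP{} already rules out spurious strict minima (see the remark preceding Definition~\ref{def:sp-minima}), the strategy throughout is to \emph{lift} the shallow \PP{} construction to depth $L$ by absorbing the extra layers into a single effective matrix and then invoking the known shallow result.

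\textit{Condition (1), $p_i \ge d_i$.} Write the network as $U\widetilde{W}_1 X$ with $U := W_L W_{L-1}\cdots W_2$. The plan has two phases. In Phase~A we freeze $\widetilde{W}_1$ and view the loss as a dense deep linear network in $(W_2,\ldots,W_L)$ with effective input $\widetilde{W}_1 X$. Using the classical fact that a dense deep linear network admits a non-increasing continuous path that reshuffles any factorization into a canonical one (obtained by inserting $T T^{-1}$ pairs, see Kawaguchi and Lu--Kawaguchi), we move to the form $W_2 = \bigl[I\ 0\bigr]^\top$, $W_3=\cdots=W_{L-1}=I$, $U=W_L$, without changing the loss. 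In Phase~B the objective has the shallow SD form $\tfrac12\|U\widetilde{W}_1 X-Y\|_F^2$, and we directly invoke the \PP{} path from the proof of Theorem~\ref{thm:two-layer-linear}(1) on the pair $(\widetilde{W}_1,U)$ to decrease monotonically to the infimum. Finally, we lift the shallow path back through the canonical factorization fixed in Phase~A.

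\textit{Condition (2), $d_y=1$ with all layers sparse.} Here we induct on the depth $L$. The base case $L=2$ is Theorem~\ref{thm:two-layer-linear}(3). For the inductive step, scalar output means the network equals a sum over input-to-output paths of the product of weights along each path. The \emph{effective} assumption (Definition~\ref{defn:eff}) guarantees at least one such path exists through every neuron. We pick one active path and use the scale invariance of the product along it to absorb the last-layer weight into the previous layer, producing an effective depth-$(L{-}1)$ sparse linear network whose sparsity pattern is the multiplicative composition of the two outgoing patterns at the merged layer. Because the residual lives in a one-dimensional target space, the inductive hypothesis applied to this reduced network produces a non-increasing continuous path to the infimum, which by construction matches the infimum of the original loss.

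\textit{Main obstacle.} The principal difficulty in Condition~(1) is the rank constraint $\mathrm{rank}(U) \le r := \min(p,d_2,\ldots,d_{L-1},d_y)$: the \PP{} path for $(\widetilde{W}_1,U)$ coming from the shallow proof may traverse matrices $U$ of rank exceeding $r$, which are unreachable as products $W_L\cdots W_2$. I expect to handle this either by (i) prepending a widening step in Phase~A that brings the intermediate layers to full rank $r$ and then verifying the shallow path can be chosen to stay within rank $\le r$, or (ii) checking that the particular \PP{} construction used in Theorem~\ref{thm:two-layer-linear}(1) already respects the rank bound, using that the effectiveness hypothesis rules out artificial bottlenecks. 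The principal difficulty in Condition~(2) is the all-sparse composition: products of sparse matrices do not in general inherit a clean pattern, so the induction must be phrased along input-to-output paths rather than via a matrix factorization, and one must verify that after absorbing the last layer the resulting composed network remains \emph{effective} so that the inductive hypothesis applies.
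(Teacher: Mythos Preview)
Your plan for Condition~(1) inverts the paper's reduction, and this is precisely what creates the rank obstacle you flag. The paper does \emph{not} collapse the dense layers $V_L,\ldots,V_2$ into a single $U$ and then invoke the shallow SD result; it goes the other way. First it uses Lemma~\ref{lemma:zero-path} exactly as in the shallow proof to drive each block $W_i$ to full column rank via a loss-invariant path that only touches $V_1$ and the $W_i$'s. Once $\mathrm{diag}(W_1,\ldots,W_s)$ has full column rank, the substitution $\widetilde{V}_1:=V_1\,\mathrm{diag}(W_1,\ldots,W_s)$ is \emph{unconstrained} (take $V_1=\widetilde{V}_1\,\mathrm{diag}(W_i)^+$), so the problem becomes the dense deep linear network $\tfrac12\|V_L\cdots V_2\widetilde{V}_1 Z-Y\|_F^2$, for which \PP{} is already known. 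The \PP{} path in $\widetilde{V}_1$ lifts to a path in $V_1$ with the $W_i$ frozen. This completely sidesteps the bottleneck issue: the dense deep result already handles arbitrary intermediate widths, so you never have to confine a shallow $U$-path to a low-rank variety. Your fixes~(i) and~(ii) would essentially re-prove the dense deep \PP{} under a rank constraint, which is unnecessary. Your Phase~A also tacitly assumes the intermediate widths admit the canonical factorization $W_3=\cdots=W_{L-1}=I$, which need not hold.

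Your plan for Condition~(2) has a genuine gap. ``Absorbing the last layer into the previous one'' means replacing $(\widetilde{V}_L,\widetilde{V}_{L-1})$ by the row $\widetilde{V}_L\widetilde{V}_{L-1}$, but with $\widetilde{V}_L$ frozen this product, as a function of the unpruned entries of $\widetilde{V}_{L-1}$, does \emph{not} range freely over all rows with the composed sparsity pattern (entries of $\widetilde{V}_L$ may be pruned or zero), so the reduced object is not a depth-$(L{-}1)$ sparse network to which the inductive hypothesis applies; nor is it clear that the infimum is preserved after freezing $\widetilde{V}_L$. The paper inducts on a different, more tractable statement: for every $j$ with $(\widetilde{V}_L\cdots\widetilde{V}_1)_j=0$, there is a loss-invariant path to a point with $\bm{w}_j'=\bm{0}$ and $(\widetilde{V}_L'\cdots\widetilde{V}_1')_j\neq 0$. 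The inductive step peels off $\widetilde{V}_1$ (not $\widetilde{V}_L$): either some row $i$ with $(\widetilde{V}_1)_{i,j}$ unpruned already has $(\widetilde{V}_L\cdots\widetilde{V}_2)_i\neq 0$, in which case first set $\bm{w}_j'=\bm{0}$ (harmless since the $j$-th coefficient vanishes) and then $(\widetilde{V}_1')_{\cdot,j}=\bm{e}_i$; or every such $i$ has $(\widetilde{V}_L\cdots\widetilde{V}_2)_i=0$, and one applies the inductive hypothesis to the $(L{-}1)$ upper layers with the layer-1 output as data to fix those $i$'s first. Once all coordinates of $\widetilde{V}_L\cdots\widetilde{V}_1$ are nonzero, the loss is convex in the first-layer weights and one finishes as in the shallow case.
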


\section{Sparse-dense Non-linear Networks} \label{sec:non-linear}

Previous work \cite{yun2018small, he2020piecewise} has already certified the intrinsic difference between linear and non-linear activations. 
However, the landscape of neural networks still has benign geometric properties \cite{venturi2019spurious, nguyen2018loss, nguyen2019connected} in the scope of no spurious valleys, if the network is wide enough. In this section, we verify that a wide SD net with non-linear activations still possesses a benign landscape.
We provide no spurious valleys for polynomial activations in Subsection \ref{subsec:polyax}, for general real analytic activations in Subsection \ref{subsec:saa}, and defer the proofs to \ref{app:non-linear}.

\subsection{Polynomial Activations} \label{subsec:polyax}
For some simple activations, such as polynomial functions, we could view them as the linear activation after non-linear mappings. 
Specifically, we give an example of quadratic activation when $d_x=2$, $\bm{x}=(x_1,x_2)$ and $\bm{w}=(w_1,w_2,b)$. 
If we define $\bm{\psi}(\bm{w}) = (w_1^2, w_2^2, 2w_1w_2, 2bw_1, 2bw_2, b^2), \bm{\phi}(\bm{x})= (x_1^2, x_2^2, x_1x_2, x_1, x_2, 1)$,
then we can view $\sigma(w_1x_1+w_2x_2+b)=(w_1x_1+w_2x_2+b)^2=\langle \bm{\psi}(\bm{w}), \bm{\phi}(\bm{x}) \rangle$, showing that we can convert polynomial activation into linear activation after we project original data and weight to mapping spaces.

More abstractly, we define $V_{\sigma}(X) := \mathrm{span}\left(\left\{\sigma(\bm{w}^\top X), \bm{w}\in\mathbb{R}^{d_x}\right\}\right)$ and the upper intrinsic dimension \cite{venturi2019spurious} as $ \dim^*(\sigma, X) :=\mathrm{dim} \left(V_{\sigma}(X)\right)$.
Then from the above example, we could see $\dim^*(\sigma, X) \leq 6$ for quadratic activation with $d_x=2$ because $ \bm{\psi}(\bm{w})\in\mathbb{R}^6$. 
Therefore, we can still regard the non-linear activations (particularly for polynomial activations) as the linear activation on feature mappings $\bm{\psi}(\bm{w})$ and $\bm{\phi}(\bm{x})$.

\begin{thm} \label{thm:infinite-dim-non-linear}
	For any continuous activation function $ \sigma (\cdot) $, suppose $ \dim^*(\sigma, X) < +\infty $. Then any effective SD net has no spurious valleys if $ p_i \geq \dim^*(\sigma, Z_i), \forall i \in[s]$.
\end{thm}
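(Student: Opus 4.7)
The plan is to reduce the non-linear landscape to a convex linear one via a feature-map argument, and then lift a non-increasing path from the reduced problem back to the original parameter space. Since $D_i := \dim^*(\sigma, Z_i) < \infty$ by assumption, for each pattern $i \in [s]$ I fix a basis matrix $B_i \in \mathbb{R}^{D_i \times n}$ whose rows span $V_{\sigma}(Z_i)$. By definition of $V_{\sigma}(Z_i)$, for every $\bm{w} \in \mathbb{R}^{d_i}$ there is a (continuous) coefficient vector $\bm{c}_i(\bm{w}) \in \mathbb{R}^{D_i}$ with $\sigma(\bm{w}^\top Z_i) = \bm{c}_i(\bm{w})^\top B_i$; stacking gives $\sigma(W_i Z_i) = C_i(W_i) B_i$ with $C_i(W_i) \in \mathbb{R}^{p_i \times D_i}$. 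Hence the loss rewrites as $L(U,W) = \tfrac{1}{2}\bigl\|\sum_{i=1}^{s} U_i C_i(W_i) B_i - Y\bigr\|_F^2$, every realizable output lies in $\mathcal{M} := \{M \in \mathbb{R}^{d_y \times n} : \mathrm{row}(M) \subseteq V\}$ with $V := \sum_i V_{\sigma}(Z_i)$, and the infimum equals $L^* = \tfrac{1}{2}\|P_\mathcal{M} Y - Y\|_F^2$.

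Next I exhibit a universal weight configuration $W^*$ attaining the infimum. I pick $\bm{w}_i^{(1)}, \dots, \bm{w}_i^{(D_i)}$ whose images $\sigma((\bm{w}_i^{(k)})^\top Z_i)$ form a basis of $V_{\sigma}(Z_i)$; since $p_i \geq D_i$, these vectors fit as the first $D_i$ rows of $W^*_i$, with the remaining rows filled arbitrarily. Then $C_i(W^*_i)$ has full column rank $D_i$, so the linear map $U \mapsto \sum_i U_i \sigma(W^*_i Z_i)$ is surjective onto $\mathcal{M}$, and $L^*$ is attained at some $(U^*, W^*)$.

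Given any starting point $(U_0, W_0)$, I construct a continuous non-increasing path in two phases. Phase~(i): with $W = W_0$ fixed, move $U$ along the straight segment from $U_0$ to $\bar{U}_0 := \argmin_U L(U, W_0)$; the loss is convex quadratic in $U$, hence non-increasing. Phase~(ii): continuously deform $W$ from $W_0$ to $W^*$ while simultaneously adjusting $U$ so that the network output $M(t) = \sum_i U_i(t)\sigma(W_i(t) Z_i)$ traces the straight segment in $\mathcal{M}$ from $P_{\mathcal{M}_0} Y$ to $P_\mathcal{M} Y$, where $\mathcal{M}_0 \subseteq \mathcal{M}$ is the subspace reachable at $W_0$. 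Because this segment terminates at the global minimizer of $L$ restricted to $\mathcal{M}$, the loss along it is convex quadratic in $t$ and non-increasing, yielding \PP{} and hence no spurious valleys.

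The principal obstacle is the continuous lift in Phase~(ii): given the target output path $M(t) \in \mathcal{M}$ and a weight path $W(t)$ from $W_0$ to $W^*$, one must produce a continuous $U(t)$ satisfying $\sum_i U_i(t) C_i(W_i(t)) B_i = M(t)$. The subtlety is that $\mathrm{rank}(C_i(W_i(t)))$ may momentarily drop on a nowhere-dense subset of $t$, making the equation infeasible there. I would resolve this by first moving $W_0$ into the full-rank stratum via an initial output-preserving (hence loss-preserving) perturbation, using lower semicontinuity of the rank together with the fact that the set of maximum-rank configurations is open and dense; thereafter $W(t)$ can be chosen generically to stay in the full-rank stratum, and $U_i(t)$ can be defined via the pseudoinverse $U_i(t) = A_i(t) C_i(W_i(t))^{+}$ for any continuous block decomposition of $M(t)$ into $\sum_i A_i(t) B_i$. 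For the CNN case, the only modification is that all blocks share a common kernel matrix: the condition $p_1 \geq \max_i D_i$ suffices to simultaneously realize a basis of $V_{\sigma}(Z_i)$ for every $i$ using a single $W^*$, and the rest of the argument applies verbatim.
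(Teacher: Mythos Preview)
Your feature-map reduction $\sigma(W_i Z_i) = C_i(W_i) B_i$ matches the paper's (the paper writes $\sigma(\bm{w}^\top\bm{z}) = \langle\bm{\psi}_i(\bm{w}),\bm{\varphi}_i(\bm{z})\rangle$ and assembles a block-diagonal $\widetilde{W}$ from the $\bm{\psi}_i(W_i)$, which is your $C_i$), and your output-space observation that the segment from $P_{\mathcal{M}_0}Y$ to $P_{\mathcal{M}}Y$ yields non-increasing loss is correct. The gap is in how you resolve the rank-deficiency obstacle that you yourself flag.

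You propose to reach the full-rank stratum $\{W_i:\mathrm{rank}\,C_i(W_i)=D_i\}$ by an output-preserving perturbation, invoking openness and density of that set. Openness holds, but density can fail for a merely continuous $\sigma$: if $\sigma$ vanishes on an interval (e.g.\ $\sigma(z)=\max(0,z-1)$), then $C_i(W_i)$ can be the zero matrix on an open set of $W_i$'s, so the full-rank stratum is not dense. Even were it dense, proximity of a full-rank $W_1$ does not by itself furnish an \emph{output-preserving} path to it, and your subsequent ``stay generically in the full-rank stratum'' from $W_1$ to $W^*$ would additionally require path-connectedness of the stratum, also not guaranteed for general continuous $\sigma$. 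The paper's fix, via Lemma~\ref{lemma:zero-path} (following \citet[Corollary~10]{venturi2019spurious}), is purely linear-algebraic and needs no topology of the stratum: if $C_i(W_i)$ has rank $r_i<D_i$ with, say, its first $r_i$ rows independent, first move $U_i$ linearly to a matrix whose last $p_i-r_i$ columns are zero while keeping $U_i C_i(W_i)$ unchanged; then the last $p_i-r_i$ rows of $W_i$ may be reassigned freely---the output is frozen because the matching $U_i$ columns vanish---so as to complete $C_i(W_i)$ to full column rank. Once every $C_i(W_i)$ is full rank, the paper simply fixes $W$ and moves $U$ along a single convex segment to the global minimum; there is no need to drive $W$ to a designated $W^*$, which eliminates your Phase~(ii) and its rank-tracking along a $W$-path altogether.
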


\begin{remark}
	The condition $\dim^*(\sigma, X) < +\infty$ has also appeared in \cite{venturi2019spurious}, but we extend it to the sparse setting. 
	Moreover, if $\sigma(z)=\sum_{k=0}^{t}a_k z^k$, then $\dim^*(\sigma, X) = O(d^t)$ (e.g., see \citet[Corollary 10]{venturi2019spurious}). 
	Therefore, we obtain no spurious valleys when the hidden width $p = \sum_{i=1}^s p_i =  \Omega(\sum_{i=1}^{s}d_i^t)$. 
	For non-overlapping filters, if we choose $d_i = d/s$ for total $s$ patterns and $t\geq 3$, then $p = \Omega(s(d/s)^t)=\Omega(d^t/s^{t-1})$, giving less hidden-layer width requirement compared to $\Omega(d^t)$ in the dense setting from \citet[Corollary 10]{venturi2019spurious}.
\end{remark}

\subsection{Real Analytic Activations} \label{subsec:saa}

When applied to general non-polynomial activations, particularly on real analytic activations, \citet{nguyen2018loss} provided results for sparse networks but adopted strictly increasing activations and strict sublevel sets to define spurious valleys. \citet{li2018benefit} used a different notion of ``spurious basin'' to show that deep fully-connected networks with any continuous activation admit no spurious basins, but did not apply it to sparse networks. 
Yet, we provide no spurious valleys for SD nets with real analytic activations. We show the comparison in Table~\ref{table:res2}.

\begin{savenotes}
	\begin{table}[!t]
		\renewcommand{\arraystretch}{1}
		\centering
		\begin{tabular}{|c|c|c|c|}
			\hline
			& \citet{nguyen2018loss} & \citet{li2018benefit} & Theorem \ref{thm:non-linear} \\
			\hline
			Sparse NN & $\checkmark$ & $\times$ & $\checkmark$ \\
			\hline
			Non-increasing $\sigma$ & $\times$ & $\checkmark$ & $\checkmark$ \\
			\hline
		\end{tabular}
		\caption{Comparison of results for two-layer networks with real analytic activations.}
		\label{table:res2}
	\end{table}
\end{savenotes}

Before introducing our results, we need some assumptions as previous works do.
\begin{assume}\label{ass:data}
	1) $\forall i \neq j \in [n], \forall k \in [d_x], \left|(\bm{x}_{i})_k\right| \neq \left|(\bm{x}_{j})_k \right| \neq 0$. 
	2) Except the last hidden-layer weight and all biases, no hidden-layer weights are all pruned, i.e., $\widetilde{W}_{i,\cdot} \neq \bm{0}_{d_x}^\top, \forall i \in[p]$.
\end{assume}

\begin{assume}\label{ass:ac}
	Activation function $\sigma(\cdot)$ is real analytic, and there exist $n$ distinct non-negative integers $l_1, \dots , l_{n}$ which form an arithmetic sequence\footnote{That is, $l_i=l_1+(i-1)(l_1-l_0), \forall i \in [n]$.}, such that
	$\sigma^{(l_i)}(0)\neq 0, \forall i \in [n]$,	where $\sigma^{(k)}(0)$ denotes the $k$-th order derivative of $\sigma(\cdot)$ at zero.
\end{assume}

The first part of Assumption \ref{ass:data} shows that training data have different features in each dimension, which also appears in previous works \cite{li2018benefit}.
And this can be easily achieved if an arbitrarily small perturbation of data is allowed. 
The second part of Assumption \ref{ass:data} excludes useless patterns which only employ bias feature $\bm{1}_n^\top$ and do not encode any effective features among training data. 
Assumption \ref{ass:ac} shows the least non-linearity requirement of activation at the origin, which can be satisfied for Sigmoid, Tanh, and Softplus (a smooth approximation of ReLU).  
Now we state two-layer SD networks admit no spurious valleys in the over-parametrized regime.

\begin{thm} \label{thm:non-linear}
	Under Assumptions \ref{ass:data} and \ref{ass:ac}, if the hidden-layer width $ p \geq n $, we have $\mathrm{rank}(\sigma(\widetilde{W}X))=n, a.s.$, and any \textit{effective} SD non-linear network has no spurious valleys.
\end{thm}

The key proof idea is to show that we could always make hidden-layer $\sigma(\widetilde{W}X)$ have full column rank with invariant loss. Otherwise, we could calculate $l_i$-th order derivative of $\sigma(\widetilde{W}X)$ at a specific row, and show the contradiction from Assumptions \ref{ass:data} and \ref{ass:ac}.
It is interesting to note that Theorem \ref{thm:non-linear} explains no substantial difference in the scope of spurious valleys when sparsity is only introduced in the first layer.
Additionally, the requirement of the large width ($p\geq n$) also appears in \cite{venturi2019spurious,li2018benefit,nguyen2018loss,nguyen2019connected}. 

\subsection{Extension}
We extend Theorem \ref{thm:non-linear} to deep sparse networks. 
Due to the complex stack structure of deep sparse networks, we could only show no spurious valleys with non-empty interiors. Such constraint points out that spurious valleys, if exist, are certainly degenerated.

\begin{thm} \label{thm:deep-non-linear}
	Under Assumptions \ref{ass:data} and \ref{ass:ac}, any \textit{effective} deep sparse neural network has no spurious valleys with non-empty interiors if the last hidden-layer width $ p \geq n $ and each output neuron has at least $n$ connected neurons in the last hidden layer.
\end{thm}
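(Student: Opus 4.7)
The plan is to reduce the deep case to the shallow SD setting already handled in Theorem~\ref{thm:non-linear} by exploiting the non-empty interior hypothesis. Let $\mathcal{T}$ be a connected component of some sublevel set $\Omega_L(\alpha)$ and suppose $\mathcal{T}$ has non-empty interior; pick an interior point $\bm{\theta}_0 \in \mathcal{T}$ and a small open ball $B \subset \mathcal{T}$ around it. The goal is to construct, starting from $\bm{\theta}_0$, a continuous non-increasing path along which $L$ converges to its infimum. Writing the output of the last hidden layer as $H(\bm{\theta}) = \sigma(\widetilde{W}_L \sigma(\cdots \sigma(\widetilde{W}_1 X)\cdots)) \in \mathbb{R}^{p \times n}$, the first milestone is to move inside $B$ (and thus without leaving $\mathcal{T}$) to a point $\bm{\theta}_1$ at which $H(\bm{\theta}_1)$ has rank $n$.

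The crux is the lemma that the set $\mathcal{R} = \{\bm{\theta} : \mathrm{rank}(H(\bm{\theta})) = n\}$ is open and dense. Openness is immediate from lower semi-continuity of rank. Density is the hard part: $H$ is real analytic in $\bm{\theta}$ by Assumption~\ref{ass:ac}, so the rank-deficient set is cut out by the simultaneous vanishing of all $n \times n$ minors of $H$, and it suffices to show at least one such minor is not identically zero as a function of the parameters. I would do this by induction on depth, at each step freezing all layers except the current one and applying the shallow argument of Theorem~\ref{thm:non-linear}: namely, taking derivatives with respect to a well-chosen subset of free weights yields a Vandermonde-type matrix whose non-vanishing is forced by Assumption~\ref{ass:data} (distinct $|x_{i,k}|$) combined with the arithmetic-progression derivative condition in Assumption~\ref{ass:ac}. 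The bias slots in each hidden layer, which remain unpruned under the effective-network assumption, play the role of the ``free'' parameters that drive the non-degeneracy.

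Once $\mathcal{R} \cap B \neq \emptyset$, pick any $\bm{\theta}_1 \in \mathcal{R}\cap B$ and connect $\bm{\theta}_0$ to $\bm{\theta}_1$ by a straight segment inside $B$; by continuity of $L$ and the openness of $\mathcal{T}$, we may assume $L$ is (weakly) decreasing along this segment after a small reparametrization, or at worst the segment stays in $\Omega_L(\alpha)$. At $\bm{\theta}_1$, freeze all parameters except the last sparse layer $U$. For each output coordinate $i \in [d_y]$, the hypothesis that output neuron $i$ has at least $n$ connected neurons means the active entries of row $U_{i,\cdot}$ index at least $n$ rows of $H(\bm{\theta}_1)$; since $H(\bm{\theta}_1)$ has full column rank $n$, these selected rows span a subspace containing $Y_{i,\cdot}^\top$, so there is a choice of $U_{i,\cdot}$ (respecting the sparsity mask) making the $i$-th residual vanish. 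Linearly interpolating from the current $U$ to this minimizing $U^\star$ gives a path along which the quadratic loss is convex in $U$ and hence monotone to zero; this completes a non-increasing path from $\bm{\theta}_0$ that drives $L$ to its infimum, contradicting that $\mathcal{T}$ was spurious.

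The main obstacle will be the inductive rank lemma. In the shallow proof one differentiates $\sigma(\widetilde{W}X)$ with respect to the first-layer biases and reads off a Vandermonde determinant from Assumption~\ref{ass:ac}; in the deep case each application of $\sigma$ reshuffles the effective ``inputs,'' so one has to verify that the analogue of Assumption~\ref{ass:data} (distinct coordinates, no zero coordinates) can be preserved after propagation through $L-1$ layers, possibly at the cost of a generic initial perturbation inside $B$. Also, the empty-interior loophole is unavoidable here: if the sublevel component is a lower-dimensional analytic stratum on which $H$ is rank-deficient everywhere, our perturbation argument fails, which is exactly why the theorem is stated only for valleys with non-empty interiors.
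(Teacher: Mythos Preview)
Your overall architecture matches the paper's proof: use the non-empty interior to perturb to a generic point where the last hidden layer has full column rank, then descend convexly in the final (sparse) layer to zero loss. The paper does exactly this, with the ``density'' step phrased as a measure-one statement rather than open-dense.

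Where your write-up leaves a genuine gap is precisely the step you flag as the main obstacle: propagating Assumption~\ref{ass:data} through the hidden layers. You propose to redo the Vandermonde derivative trick of Theorem~\ref{thm:non-linear} at each layer, but that trick needs the \emph{inputs} to that layer to have distinct nonzero absolute values in some coordinate, and you do not supply the mechanism for this. The paper's device is Lemma~\ref{lemma:dawei} (from \citet{li2018benefit}): for real analytic $\sigma$ with some nonzero derivative at $0$, the set $\{\bm{z}:\sigma(\bm{a}^\top\bm{z})=\sigma(\bm{b}^\top\bm{z})\}$ has measure zero whenever $\bm{a}\neq\bm{b}$. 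Applying this to $\sigma^2$ and to the pair $(\bm{z}_i,\bm{z}_j)$ of hidden outputs shows that for almost every weight vector the next-layer outputs again have pairwise distinct absolute values (and are nonzero by Lemma~\ref{lemma:real-ana}), so Assumption~\ref{ass:data} part 1) regenerates layer by layer. Without this lemma your induction does not close. Incidentally, your remark that ``bias slots \ldots\ remain unpruned under the effective-network assumption'' is not supported by the paper's setup: biases may be pruned, and the shallow proof of Theorem~\ref{thm:non-linear} differentiates with respect to an ordinary weight coordinate $(\bm{w}_t)_1$, not a bias.

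Two smaller points. First, for the sparse final layer you need not merely $\mathrm{rank}(H)=n$ but that the $\geq n$ rows indexed by each output neuron's mask have rank $n$; the paper gets this from the stronger a.e.\ statement that \emph{any} $n$ rows of $H$ are independent (proved in Theorem~\ref{thm:non-linear}). Second, you need not worry about monotonicity on the segment from $\bm{\theta}_0$ to $\bm{\theta}_1$: since $\bm{\theta}_1\in B\subset\mathcal{T}$ and the subsequent descent path from $\bm{\theta}_1$ stays in $\Omega_L(\alpha)$, it stays in the connected component $\mathcal{T}$, which already gives $\inf_{\mathcal{T}}L=\inf L$.
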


Analogous to the two-layer case, the main proof idea is to show the almost surely full rank property of the output matrix of each hidden layer. 
Moreover, previous works also need to refine the definition of the spurious valley to deep networks, such as the spurious basin \cite{li2018benefit}, and the spurious ``valley'' defined by \textit{strict} sublevel sets \cite{nguyen2018loss, nguyen2019connected}.

\section{Sparse-sparse Networks} \label{subsec:conter}
Finally, we consider the remaining setting when sparsity is introduced in both layers, i.e., SS nets (sparse-sparse networks). 
In the following, we will show the necessity of a dense (or $n$ connected) final layer for general continuous activations through a concrete example with spurious valleys. 
Furthermore, in our construction, the spurious valley is a global minimum set of a sub-network \footnote{Here, the \textit{sub-network} means a network by removing some connections from the original network.} of the original sparse network.

Since no spurious valleys appear in dense wide (two-layer) networks \cite{venturi2019spurious, nguyen2018loss, nguyen2019connected},
our result indicates that sparsity can provably deteriorate the landscape.

\begin{thm} \label{thm:local-valley}
	If the continuous activation satisfies $\sigma(0)=0$ and $\sigma(\mathbb{R}) \neq \{0\}$, and the hidden width $p \geq 3$, then there exists an \textit{effective} SS network, which has a spurious valley. Additionally, this spurious valley corresponds to a global minimum set of a sub-network of the original sparse network. 
\end{thm}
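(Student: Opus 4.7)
The approach is an explicit construction. Since $\sigma(\mathbb{R})\neq\{0\}$, pick $z_0$ with $a:=\sigma(z_0)\neq 0$; this single value, together with $\sigma(0)=0$, is all we may use about $\sigma$. I would take $p=3$ hidden units and small input/output dimensions (e.g.\ $d_x=d_y=2$) with a sparsity pattern on both layers designed so that one hidden neuron is the unique contributor to some output--sample pair, while the remaining two neurons share connections in a way that couples their contributions across several output coordinates. The training data $X$ and target $Y$ are then chosen (in terms of $a$ and $z_0$) so that the full SS network $\mathcal{N}$ can drive the squared loss to $0$ using all three neurons together.

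Next I would identify a sub-network $\mathcal{N}'$ by deleting a specific subset of edges (e.g.\ all output edges of one hidden neuron, or the edges linking one of the coupled neurons to one of the outputs). The deletion should be chosen so that $\mathcal{N}'$ is structurally unable to produce the prescribed value on at least one output--sample pair, forcing its global minimum value $L^*$ to be strictly positive. Let $\mathcal{T}$ be the embedding of the global-minimum set of $\mathcal{N}'$ into the full parameter space (deleted edges held at $0$, remaining edges set to $\mathcal{N}'$-optimal values). By construction, $L\equiv L^*$ on $\mathcal{T}$ and $L^*>0=\inf L$, so $\mathcal{T}$ is a candidate spurious valley that is, by definition, a global minimum set of a sub-network.

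The main technical step is to show that $\mathcal{T}$ is an entire connected component of the sublevel set $\Omega_{L}(L^*)$, which by Definition~\ref{def:sp-valley} certifies that $\mathcal{T}$ is indeed a spurious valley. I would consider an arbitrary continuous path $\theta(t)$ starting in $\mathcal{T}$ and show that, as soon as the path injects any non-zero activation into a deleted edge, the induced change in the network output is \emph{coupled} across several outputs through the shared hidden neuron(s). Writing the loss as a sum of squared-error terms and exploiting the coupling constraint (e.g.\ the rank-$1$ nature of the contribution $u_i\sigma(w_i^\top X)$ of a single hidden neuron across outputs), I would derive an algebraic inequality showing that any activation pattern that strictly improves one mismatched term must strictly worsen another term that was exactly matched on $\mathcal{T}$, and that the minimum over compensating adjustments of the still-active weights cannot overcome this deficit without first crossing the barrier $\{L>L^*\}$.

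The hard part will be step three, and in particular doing it \emph{uniformly in $\sigma$}. Continuous activations with only $\sigma(0)=0$ and $\sigma\not\equiv 0$ can behave very differently near the origin (smooth with $\sigma'(0)\neq 0$, flat on a neighborhood, or highly oscillatory), so any Taylor-based local analysis is fragile; a first-order or second-order argument that works for $\tanh$ may collapse for $\sigma$ identically zero near $0$. The way around this is to avoid local smoothness altogether and make the barrier follow purely from the \emph{global} incompatibility of the system of squared-error equations on the outputs, using only $\sigma(0)=0$ (for the ``off'' state of deleted edges on certain samples) and the single value $\sigma(z_0)=a$ (for realizing the target). If the sparsity pattern and $Y$ are chosen so that this incompatibility is algebraic, the spurious-valley conclusion follows without any regularity assumption on $\sigma$ beyond continuity.
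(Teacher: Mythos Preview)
Your strategy is exactly the paper's: build an explicit SS net, single out a sub-network whose global-minimum set $\mathcal{T}$ sits at a strictly positive loss $L^*$, then show $\mathcal{T}$ is its own connected component of $\Omega_L(L^*)$ via an algebraic inequality on a handful of residual entries. Two places where the paper's execution is sharper than your sketch are worth flagging.

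First, $d_x=d_y=2$ is probably too small for the coupling you need. The paper takes $d_x=3$, $X=I_3$, and three core output rows; the deleted edge is a single second-layer weight $w_4$ (from hidden neuron $h_2$ to output $y_3$). The coupling is then carried by the three residual entries $(2,3),(3,2),(3,3)$: on $\mathcal{T}$ the first two vanish and the third equals $-y_4$, and any attempt to shrink the $(3,3)$ error via $\delta_4$ or via $\sigma(w_8+\delta_8)$ necessarily activates $(3,2)$ or $(2,3)$ respectively through the shared neuron. The barrier is then a one-line AM--GM:
\[
(w_3+\delta_3)^2\sigma(w_8+\delta_8)^2 + \delta_4^2\sigma(w_7+\delta_7)^2 + \bigl(\delta_4\sigma(w_8+\delta_8)-y_4\bigr)^2 \ \ge\ y_4^2,
\]
valid whenever $|(w_3+\delta_3)\,\sigma(w_7+\delta_7)|>y_4$. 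Three interacting entries (hence at least three outputs and three samples) are what make this triangle close; with $d_y=2$ you will struggle to force the required two-way trade-off.

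Second, you do not quite escape a local step. The paper invokes continuity exactly once, to ensure $|\sigma(w_7+\delta_7)|\ge |\sigma(w_7)|/2$ for small $\delta_7$; since $w_3\sigma(w_7)=y_3$ on $\mathcal{T}$ and the target is chosen with $y_3>4y_4$, the AM--GM hypothesis follows for small perturbations. This is only continuity, not smoothness, so your instinct to avoid Taylor expansions is correct; but anchor the inequality to the nonzero value $\sigma(w_7)$ already present on $\mathcal{T}$ rather than hoping for a purely global incompatibility. With that adjustment your plan goes through and coincides with the paper's proof.
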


\begin{proof}
    The proof mainly utilizes the relationship among sparse connections.
	We consider a sparse network with the sub-structure in the right graph of Figure \ref{fig:1}, and the objective becomes:
	\begin{equation}\label{eq:cx-obj}
	\begin{aligned}
	\min_{\substack{\bm{\theta}=\left(w_1,\dots, w_8\right) \\ \bm{\theta}_1=\left(\bm{v}_1,\bm{v}_2,\bm{v}_3, V\right)}} L(\bm{\theta}, \bm{\theta}_1) & = \frac{1}{2}\left\| \left(\begin{smallmatrix}
	w_1 & 0 & \bm{0}^\top \\ w_2 & w_3 & \bm{0}^\top \\ 0 & w_4 & \bm{0}^\top \\ \bm{0} & \bm{0} & V
	\end{smallmatrix}\right)\sigma\left[
	\left(\begin{smallmatrix}
	w_5 & w_6 & 0 \\ 0 & w_7 & w_8 \\ \bm{v}_1 & \bm{v}_2 & \bm{v}_3 
	\end{smallmatrix}\right)X\right]-Y
	\right\|_F^2 \\
	&= \frac{1}{2}\left\|
	\left(\begin{smallmatrix}
	w_1\sigma(w_5) & w_1\sigma(w_6) &  0 \\ 
	w_2\sigma(w_5) & w_2\sigma(w_6)+w_3\sigma(w_7) & w_3\sigma(w_8) \\ 
	0 & w_4\sigma(w_7) & w_4\sigma(w_8) \\
	V\sigma(\bm{v}_1) & V\sigma(\bm{v}_2) & V\sigma(\bm{v}_3) \\
	\end{smallmatrix}\right)-Y
	\right\|_F^2,
	\end{aligned}
	\end{equation}
	where we assume $X = I_3$ and $Y = \left(\begin{smallmatrix}
	y_1 & y_1   & 0    \\
	y_2 & y_2+y_3 & 0    \\
	0 &    0   & y_4  \\
	\bm{*} & \bm{*} & \bm{*} \\
	\end{smallmatrix}\right)$ with $y_3>4y_4>4y_1>0$, $y_2>0$. 
	
	Because the variables $\bm{v}_i$ and $V$ are unrelated to the $w_i$, we only need to consider the $w_i$ while assuming the $\bm{v}_i$ and $V$ can always obtain zero loss for the remaining rows in $Y$.
	Therefore, in the following, we only consider the parameter $\bm{\theta}$.
	
	Now we consider the sublevel set $\mathcal{T}:=\{\bm{\theta}: L(\bm{\theta}) \leq y_4^2 / 2\} \cap \{w_4=0\}$.
	Noting that when $w_4=0$, $L(\bm{\theta})\geq y_4^2/2$, we obtain $L(\bm{\theta})=y_4^2/2$ and
	$\left(\begin{smallmatrix}
	w_1 & 0 \\ w_2 & w_3
	\end{smallmatrix}\right)
	\left(\begin{smallmatrix}
	\sigma(w_5) & \sigma(w_6) & 0 \\ 0 & \sigma(w_7) & \sigma(w_8) \\
	\end{smallmatrix}\right) = \left(\begin{smallmatrix}
	y_1 & y_1 & 0   \\
	y_2 & y_2+y_3 & 0 \\
	\end{smallmatrix}\right), \forall \bm{\theta} \in \mathcal{T}$, leading to that 
	\begin{equation*}\label{eq:T}
	\begin{aligned}
		\mathcal{T} {=} \bigg\{ \bm{\theta}(a,b) \bigg| & 
		\left(\begin{smallmatrix}
		w_1 & 0 \\ w_2 & w_3 \\ 0 & w_4
		\end{smallmatrix}\right) {=} \left(\begin{smallmatrix}
		y_1 a & 0 \\ y_2 a & y_3b \\ 0 & 0
		\end{smallmatrix}\right), \left(\begin{smallmatrix}
		\sigma(w_5) & \sigma(w_6) & 0 \\ 0 & \sigma(w_7) & \sigma(w_8) \\
		\end{smallmatrix}\right) {=} \left(\begin{smallmatrix}
		1/a & 1/a & 0 \\ 0 & 1/b & 0 \\
		\end{smallmatrix}\right), a, b {\neq} 0 \bigg\}.
	\end{aligned}
	\end{equation*}
	Since $\sigma(\mathbb{R}) \neq \{0\}$, we have $\mathcal{T}\neq \emptyset$. We choose a connected component of $\mathcal{T}_0 \in\mathcal{T}$ with $a, b > 0$ or $a,b<0$ depending on  $\sigma(\mathbb{R})$.
	For any small disturbances $\delta_i, i\in[8]$,
	\begin{equation*}
	\begin{aligned}
	& 2 L(w_1+\delta_1 ,\dots, w_8+\delta_8) = \left\| \left(\begin{smallmatrix}
	* & * & * \\
	* & * & (w_3+\delta_3)\sigma(w_8+\delta_8) \\
	* & \delta_4\sigma(w_7+\delta_7) & \delta_4\sigma(w_8+\delta_8)-y_4
	\end{smallmatrix}\right) \right\|_F^2 \\
    & \stackrel{(i)}{\geq}  (w_3+\delta_3)^2\sigma(w_8+\delta_8)^2 + \delta_4^2\sigma(w_7+\delta_7)^2 +  \left(\delta_4\sigma(w_8+\delta_8)-y_4 \right)^2 \\
    & \stackrel{(ii)}{\geq}  2\left|(w_3+\delta_3)\sigma(w_7+\delta_7)\delta_4\sigma(w_8+\delta_8)\right|- 2y_4\delta_4\sigma(w_8+\delta_8)+y_4^2 \stackrel{(iii)}{\geq} y_4^2,
	\end{aligned}
	\end{equation*}
	where we only consider the last column and row of loss in $(i)$, and $(ii)$ uses the arithmetic and geometric means inequality, $(iii)$ is derived from small permutation when we choose $|\delta_3| \leq |w_3|/2$ and $|\delta_7| $ small enough, such that $|\sigma(w_7+\delta_7)-\sigma(w_7)|\leq |\sigma(w_7)|/2$ since $\sigma(\cdot)$ is continues and $\sigma(w_7)\neq 0$, leading to
	$|(w_3+\delta_3)\sigma(w_7+\delta_7)|\geq |(w_3/2)\sigma(w_7)/2|= y_3/4>y_4>0$.
	
	Thus, each $\bm{\theta}\in\mathcal{T}_0$ is a local minimum, and $\mathcal{T}_0$ is connected. Moreover, we could see if $\delta_4 \neq 0$, from $(iii)$, the equality only holds when $\sigma(w_8+\delta_8)=0$. Then from $(i)$, we obtain that
	$ 2 L(w_1+\delta_1 ,\dots, w_8+\delta_8) \geq \delta_4^2\sigma(w_7+\delta_7)^2 + y_4^2 \geq \delta_4^2\sigma(w_7)^2/4 + y_4^2>y_4^2$.
	Hence, a connected component of a sublevel set $\{\bm{\theta}: L(\bm{\theta}) \leq y_4^2/2 \}$ that contains $\mathcal{T}_0$ would never reach some point $\bm{\theta}'$ with $w_4'\neq 0$ and $L(\bm{\theta}') \leq y_4^2/2$.
	
    Finally, we show that $\mathcal{T}_0$ is not a global minimum set. We choose a point $\tilde{\bm{\theta}}(a)$ that
    $\left(\begin{smallmatrix} w_1 & 0 \\ w_2 & w_3 \\ 0 & w_4
	\end{smallmatrix}\right) = \left(\begin{smallmatrix}
	y_1 a & 0 \\ (y_2+y_3) a & 0 \\ 0 & y_4a
	\end{smallmatrix}\right)$ and $\left(\begin{smallmatrix}
	\sigma(w_5) & \sigma(w_6) & 0 \\ 0 & \sigma(w_7) & \sigma(w_8) \\
	\end{smallmatrix}\right) = \left(\begin{smallmatrix}
	\frac{y_2}{(y_2+y_3)a} & \frac{1}{a} & 0 \\ 0 & 0 & \frac{1}{a} \\
	\end{smallmatrix}\right)$, where $a \neq 0, 1/a \in \sigma(\mathbb{R})$, and using intermediate value theorem for continuous activation $\sigma(\cdot)$, we have $\frac{y_2}{(y_2+y_3)a} \in \sigma(\mathbb{R})$ because $\sigma(0)=0$ and $0<y_2<y_2+y_3$.
	Then we obtain $L(w_1, \dots, w_8) = \frac{y_1^2y_3^2}{2(y_2+y_3)^2} < y_1^2/2 <y_4^2/2$. Hence, a spurious valley exists. 
\end{proof}

\begin{remark}
	From Theorem \ref{thm:local-valley}, we already know the necessity of $n$ connections for each output neuron to guarantee no spurious valleys as dense networks. 
	Furthermore, we wonder whether a sparse network with at most $n{-}1$ connections for each output neuron and $n$ hidden neurons, is the same as $n{-}1$ hidden-neuron dense networks? 
	The answer is no. 
	We do allow at least $n$ effective neurons after pruning, while in narrow-network examples, no more than $ (n {-} 1) $ neurons exist. This leads to different expressive power. 
    We elaborate on this below. We denote $A_{a\times b}$ as a matrix in $\mathbb{R}^{a\times b}$. 
    For a dense network with $n{-}1$ hidden nodes, we have $\mathrm{rank} (U_{m \times (n-1)}\sigma(\widetilde{W}_{(n-1) \times d}X_{d\times n})) \leq n {-} 1$. In our result, we allow $m = n$ and a sparse matrix $\widetilde{U}_{ n \times n} = I_n$ 
    (i.e., only one connection for each output neuron), then from Theorem \ref{thm:non-linear}, $\mathrm{rank}(\widetilde{U}_{m \times n} \sigma(\widetilde{W}_{n \times d}X_{d\times n})) = n$, a.s. 
    Thus, wide sparse networks have more expressive power than dense narrow networks.
\end{remark}

From the proof of Theorem \ref{thm:local-valley}, we need to underline that such a spurious valley is a set of spurious minima generated from the sub-network when $w_4$ is removed. 
We encounter no spurious valleys when applied to wide SD networks (see Theorem \ref{thm:non-linear}). 
Thus, sparse connections indeed deteriorate the landscape because sparsity obstructs the decreasing path. 
Moreover, when the number of unpruned weights in each row of the final layer is less than the training sample size, a strict valley may indeed exist from our construction. 
Note that \citet{venturi2019spurious} also gave existence proof for spurious valleys on dense narrow networks.
The difference is that our example is concrete and applies to sparse structure with arbitrary width.
Furthermore, our example also covers the SS linear networks by adopting linear activation. 
Previous works have proven that there are no spurious valleys in dense linear networks \cite{venturi2019spurious}.
Hence, we conclude that sparsity in the final layer also deteriorates the landscape of linear networks.

\subsection{The instruction on reality}

Theorems \ref{thm:non-linear} and \ref{thm:deep-non-linear} show positive results for sparse networks. Specifically, with a dense final layer, the sparsity in other layers generally would not affect the benign landscape from the theoretical perspective. 
While Theorem \ref{thm:local-valley} provides the negative part, the sparsity in the final layer would bring spurious valleys.
These results convey the message that \textit{we need to be cautious when pruning the final hidden layer}, because sparsifying the final-layer weight could directly deteriorate the landscape of networks.
Intuitively, the final-layer weight decides the output of the network, which is crucial for fitting data from optimization even if the width is large enough. 
Thus, we need to preserve a dense final-layer weight if possible.  
Additionally, pruning the other layer weights has less impact on the landscape. 
Hence, we can adopt many variants of pruning methods in practice to obtain powerful features and produce high-efficient networks.
Nowadays, pruning methods adopt many tricks, such as iterative pruning and retraining methods \cite{liu2018rethinking, frankle2018lottery}. 
Hence, we recommend a small fraction of the pruning rates in the final-layer weight to avoid spurious valleys from the landscape perspective. 
Moreover, a recent work \cite[Appendix F]{liu2022the} also empirically discovers that allocating more parameters to the last fully-connected layer while keeping the overall parameter count fixed could lead to higher accuracy. Such experimental findings in \cite{liu2022the} demonstrate the importance of final layer weight from the generalization perspective, which support our viewpoint on the final-layer weight from another aspect.

\section{Experiments}\label{sec:exp}
Previous theoretical findings mainly describe the global landscape of sparse neural networks.
Now we conduct experiments to investigate deep sparse networks using practical optimization methods in this section.

\paragraph{Datasets}
We consider a synthetic dataset and the CIFAR10 dataset \citep{krizhevsky2009learning}. 
For the synthetic dataset, we sample training data $\bm{x}_i \stackrel{i.i.d.}{\sim} \mathcal{N}(\bm{0}, I_{100})$ and construct the corresponding target $\bm{y}_i = A\bm{x}_i + \bm{\varepsilon}_i$, where $\bm{\varepsilon}_i \stackrel{i.i.d.}{\sim} \mathcal{N}(\bm{0}, I_{d_y}), \bm{\varepsilon}_i \perp \!\!\! \perp \bm{x}_j$, and $A_{k l} \stackrel{i.i.d.}{\sim} \mathcal{N}(0, 1), \forall i, j \in [400], k \in [d_y], l\in[100]$. 
We scale $A$ to make $\|A\|_F = 5$ for avoiding a small numerical output when extreme sparsity is introduced. In the following, we construct two datasets with $d_y=1$ and $10$.
For the CIFAR10 dataset, we sample a subdataset with 400 training samples and use one-hot labels as targets ($d_y=10$).

\paragraph{Experimental Settings}
We use Mean Squared Error (MSE) loss with a unified constant learning rate of $0.01$ under the gradient descent (GD) method. And we construct the same hidden width of $400$ across all layers, which is the same as training samples to match our assumptions. 
We do not employ bias terms and any data pre-processing or data-augmentation in all of our experiments to monitor our theoretical schemes.
We use default initialization in PyTorch \cite{paszke2017automatic} among all experiments, unless specifically mentioned.

\begin{figure}[t]
	\centering
	\begin{subfigure}[b]{0.4\textwidth}
		\includegraphics[width=\linewidth]{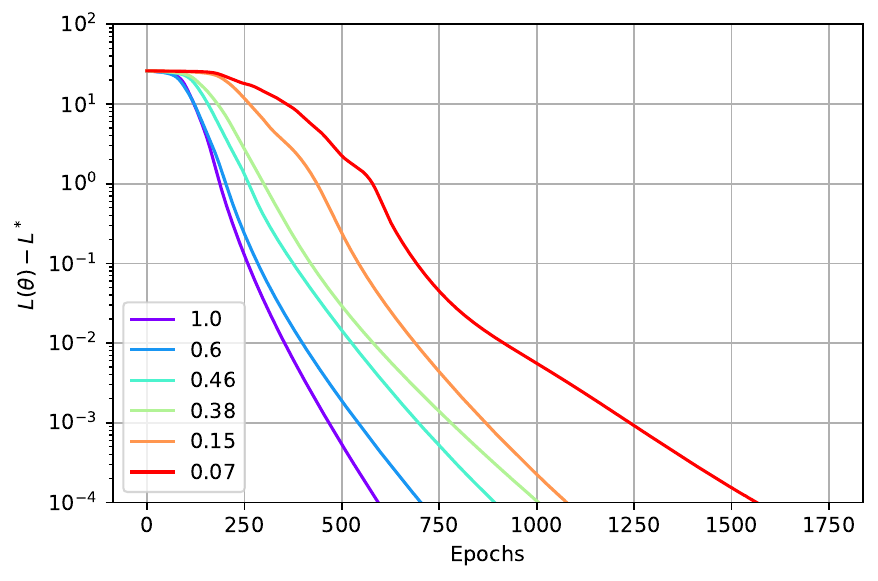}
		\caption{$d_y=10$.} \label{fig:deep-sparse-linear1b}
	\end{subfigure}
	\begin{subfigure}[b]{0.4\textwidth}
		\includegraphics[width=\linewidth]{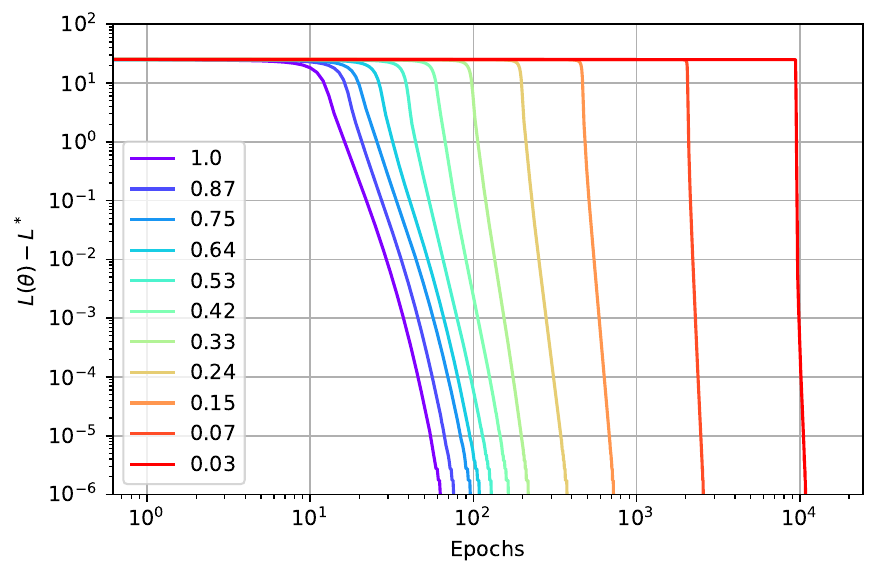}
		\caption{$d_y=1$.} \label{fig:deep-sparse-linear1a}
	\end{subfigure}
	\caption{Simulation of 5-layer linear sparse networks. (a) Only the first-layer weight is sparse and the sparse pattern is over-parameterized ($p_i\geq d_i, \forall i\in[s]$). The sparse ratio only applies to the first layer. (b) All layer weights are sparse, and total sparsity is shown in the legend. Legend `$1.0$' means the dense network. }
	\label{fig:deep-sparse-linear}
\end{figure}

\subsection{Sparse Linear Networks}
We first verify the results of deep sparse linear networks shown in Theorem \ref{thm:linear-deep}. Specifically, there is no spurious valleys for a deep sparse linear network when 1) only the first-layer weight is sparse and each sparse pattern in the sparse first layer is over-parameterized (i.e., $p_i\geq d_i, \forall i\in[s]$), which is shown in Figure \ref{fig:deep-sparse-linear1b} with $d_y=10$, or
2) all layer weights are sparse and the output dimension $d_y=1$, which is conducted in Figure \ref{fig:deep-sparse-linear1a}.
We adopt a $5$-layer linear network for example, because a large depth may cause the gradient vanishing and exploding \cite{sun2019optimization}.
To guarantee the \textit{effectiveness} of sparse networks, we first use random pruning with fixed pruning rates for each layer, and then check the useless connections, which may lead to a sparser structure. 

We plot the difference between the current loss ($L(\bm{\theta})$) and the global minimum ($L^*$) obtained from linear regression in Figure \ref{fig:deep-sparse-linear}. Here, $\bm{\theta}$ is the remaining parameters after random pruning and effectiveness detection, and from our constriction, $L^*=0$.
Though we only show no spurious valleys in Theorem \ref{thm:linear-deep}, we
suspect that a nice landscape enables the common optimizer to find a global minimum (i.e., zero loss) in practice.
The simulation verifies our conjecture: the optimization trajectory has nearly non-increasing loss and approaches to a global minimum.

\subsection{Sparse-dense Non-linear Networks}

\begin{figure}[t]
	\centering
	\begin{subfigure}[b]{0.3\textwidth}
		\includegraphics[width=\linewidth]{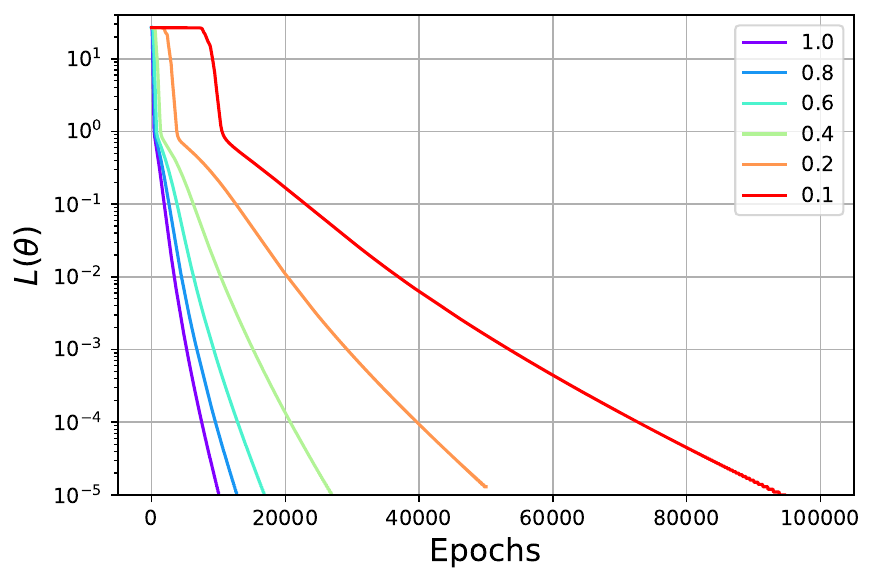}
		\includegraphics[width=\linewidth]{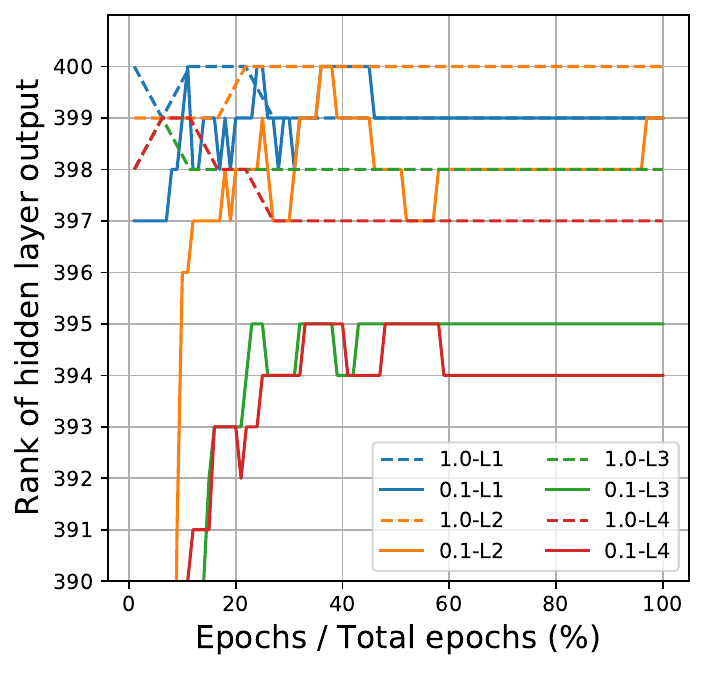}
		\caption{Tanh (Synthetic).} \label{fig:deep-sparse-nl-1a}
	\end{subfigure}
	\begin{subfigure}[b]{0.3\textwidth}
		\includegraphics[width=\linewidth]{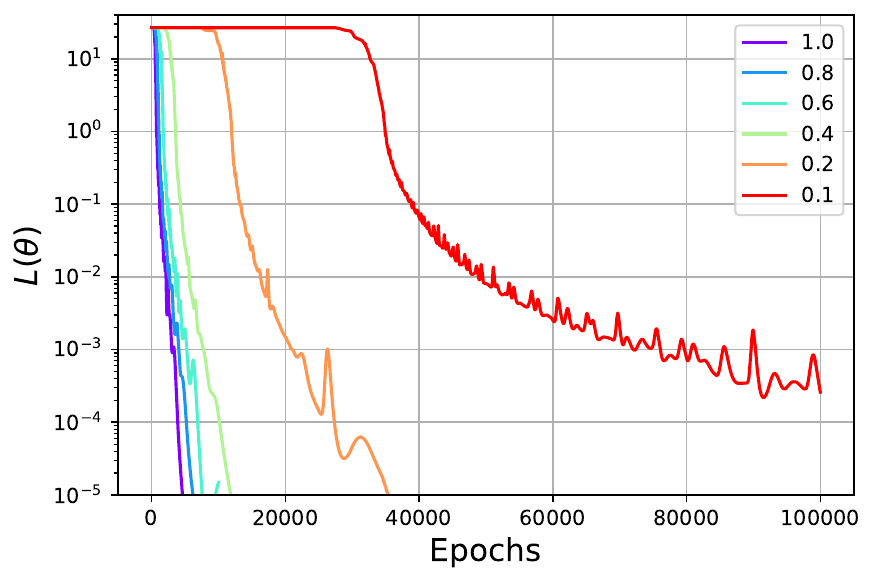}
		\includegraphics[width=\linewidth]{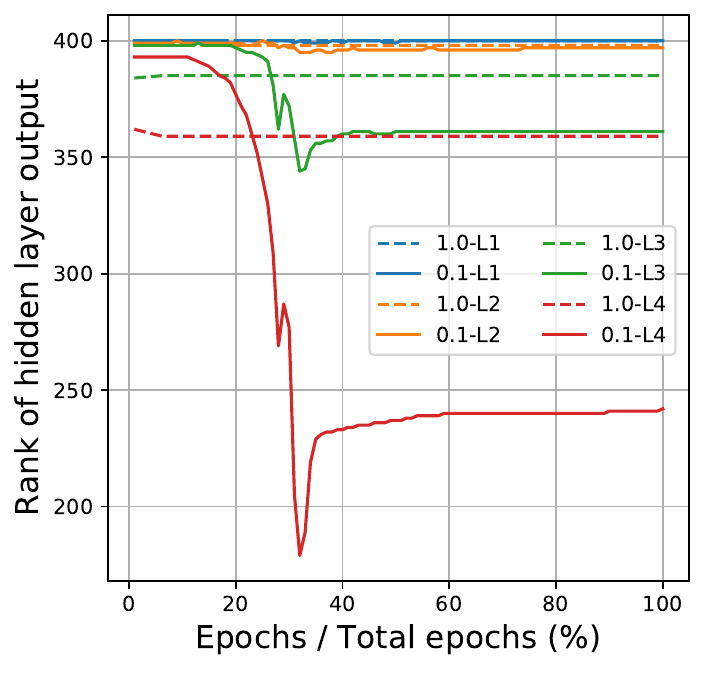}
		\caption{ReLU (Synthetic).} \label{fig:deep-sparse-nl-1b}
	\end{subfigure}	
	\begin{subfigure}[b]{0.3\textwidth}
	\includegraphics[width=\linewidth]{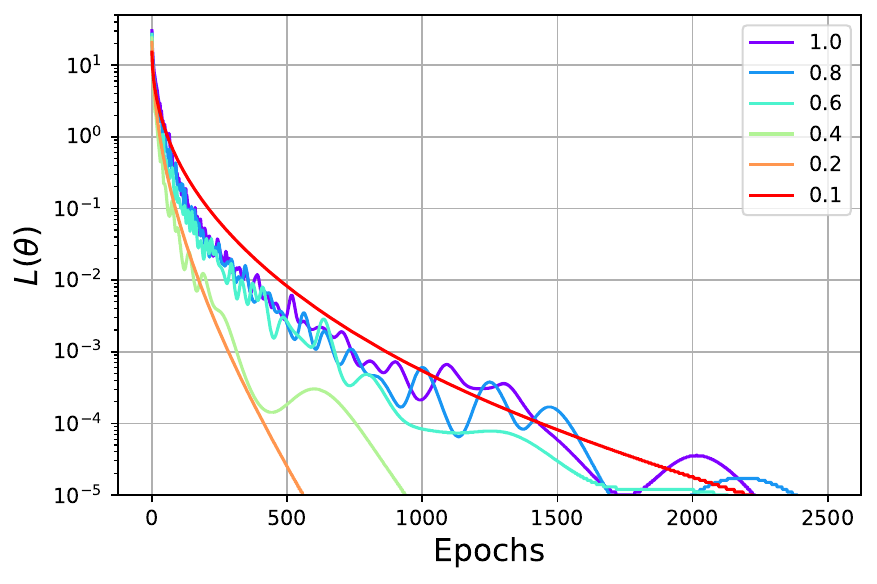}
	\includegraphics[width=\linewidth]{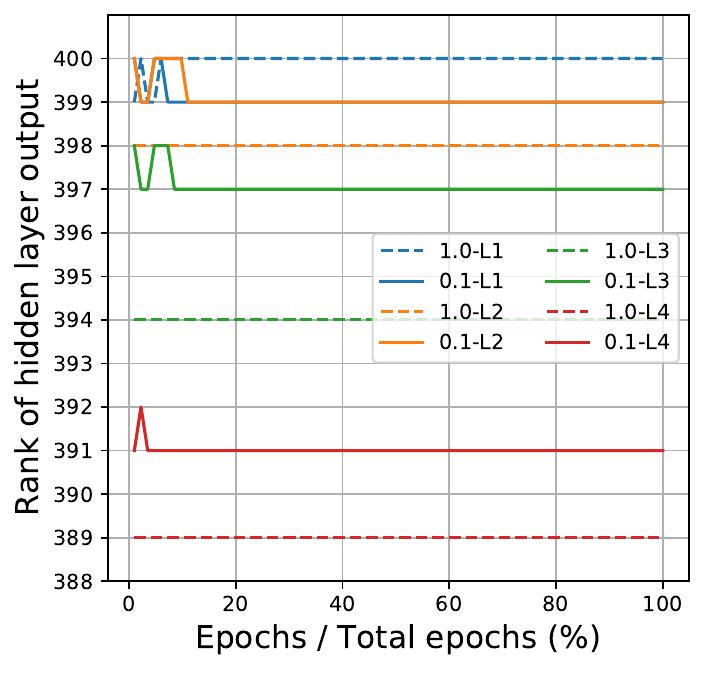}
	\caption{Tanh (CIFAR10).} \label{fig:deep-sparse-nl-1c}
	\end{subfigure}
	\caption{Simulation of 5-layer non-linear sparse networks with $d_y=10$ and total sparsity shown in the legend. We plot loss in the first row, and the rank of each hidden-layer output in the second row. Legend `0.1{-}L3' means the rank of $3$rd hidden-layer output under $0.1$ sparsity. 
	Since total running epochs are different for each sparsity, we rescale the epochs by total epochs to plot all results in one graph.}
	\label{fig:deep-sparse-non-linear}
\end{figure}

Second, we conduct experiments for deep sparse non-linear networks motivated by Theorem \ref{thm:deep-non-linear} to verify that the large width of the final layer in an SD net would preserve a benign landscape as the dense network, and the full rank property of each hidden-layer output matrix.
The numerical findings are shown in Figure \ref{fig:deep-sparse-non-linear}, where we use the dataset with $d_y=10$ and prune all layer weights except the final layer to guarantee $n$ connections for each output neuron.
For reasonable comparison, the input neurons all have at least a directed path to one final output. 
We will also examine whether each neuron in the last hidden layer is \textit{effective}.
During optimization, we discover that default initialization gives really slow convergence on the CIFAR10 dataset. Hence we multiply each weight by $10$ after initialization when running on the CIFAR10 dataset.

For non-linear networks in Figure \ref{fig:deep-sparse-non-linear}, we observe that all sparse structures could obtain near-zero training loss, which matches our theoretical finding. 
Moreover, we discover that the outputs of the first few hidden layers are full rank, 
but the output matrices of the later layers become rank deficient even for dense networks in Figure \ref{fig:deep-sparse-nl-1a}.
We suspect that such a phenomenon could be caused by numerical overflow from the saturation of activations and the small default initialization.
When applied to the CIFAR10 dataset with large initialization, we discover a stable rank phenomenon in Figure \ref{fig:deep-sparse-nl-1c}. 
Thus, we could confirm that the outputs of hidden layers are nearly full rank.

Additionally, we also attempt the non-analytic ReLU activation in Figure \ref{fig:deep-sparse-nl-1b}, though our theory does not cover it. 
We find it still has near-zero loss even with a large sparsity. But the training has a long distinct stuck period at the origin and oscillates around at last, and the rank variation under ReLU is also strange. 
For dense NNs, it preserves full rank property in the first few layers but becomes rank deficient at the last few hidden layers. 
When applied to sparse NNs, the rank drops quickly during training but does not recover at last.
We view such phenomenon as future work.

Overall, the approximate full rank property still holds when large sparsity is introduced as long as the effective width is larger than the number of training samples. 
But the sparsity usually introduces a much longer training trajectory employed with the default setting and has complex behavior coupled with optimization methods.

\subsection{Sparse-sparse Networks}\label{sec:ss-non}
\begin{figure}[t]
	\centering
	\begin{subfigure}[b]{0.4\textwidth}
		\includegraphics[width=1\linewidth]{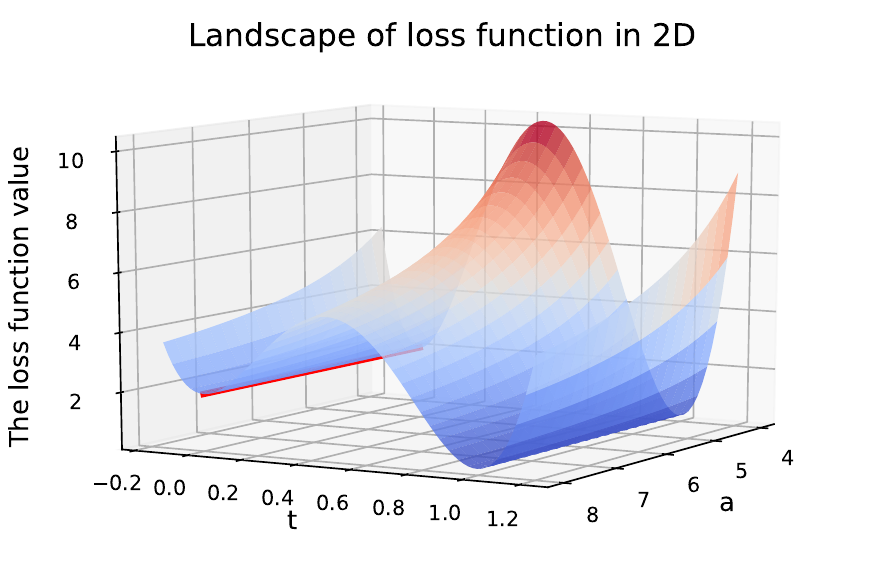}
		\caption{Tanh.} \label{fig:deep-sparse-nl-vis-1a}
	\end{subfigure}
	\begin{subfigure}[b]{0.4\textwidth}
		\includegraphics[width=1\linewidth]{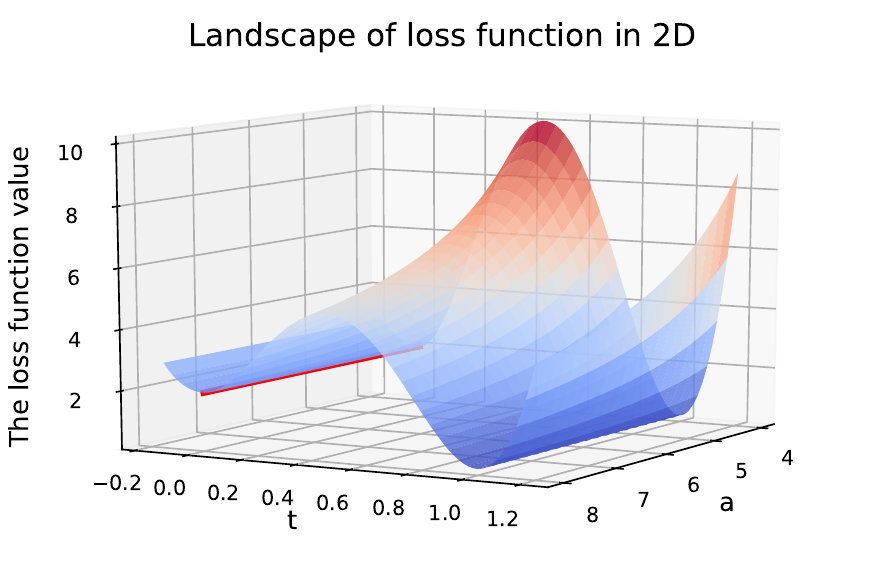}
		\caption{LeakyReLU.} \label{fig:deep-sparse-nl-vis-1b}
	\end{subfigure}
	\caption{Loss landscape visualization for the examples in Section \ref{sec:ss-non}. We project the landscape into a 2D space spanned by $a$ and $t$, where $t$ describes the interpolation: $t\tilde{\bm{\theta}}(a)+(1-t)\bm{\theta}(a, b)$.
	Here $a$, $\tilde{\bm{\theta}}(a)$ and $\bm{\theta}(a,b)$ follow the proof of Theorem \ref{thm:local-valley}.
	The red line is the loss value at $\bm{\theta}(a,b)$ with $b=10$ and various $a>0$, which form a spurious valley.}
	\label{fig:deep-sparse-nonlinear-vis}
\end{figure}

Finally, we verify the bad landscape for SS nets constructed in the proof of Theorem \ref{thm:local-valley}. 
For simplification, we select target $X=I_3, Y = \left(\begin{smallmatrix}
1 & 1 & 0 \\
2 & 8 & 0 \\
0 & 0 & 2 \\
\end{smallmatrix}\right)$\footnote{Indeed, we only need $y_3>y_4>y_1>0, y_2>0$ in the proof in Theorem~\ref{thm:local-valley}.} following Eq.~\eqref{eq:cx-obj} and consider the sparse structure in the right graph of Figure \ref{fig:1} with $d_x=3, p=2, d_y=3$, i.e., the loss function is 
\begin{equation*}
    \begin{aligned}
        L(\bm{w}) &= \frac{1}{2}\left\| \left(\begin{smallmatrix}
    	w_1 & 0 \\ w_2 & w_3 \\ 0 & w_4 
    	\end{smallmatrix}\right)\sigma\left[
    	\left(\begin{smallmatrix}
    	w_5 & w_6 & 0 \\ 0 & w_7 & w_8
    	\end{smallmatrix}\right)X\right]-Y
    	\right\|_F^2 = \frac{1}{2}\left\|
    	\left(\begin{smallmatrix}
    	w_1\sigma(w_5)-1 & w_1\sigma(w_6)-1 &  0 \\ 
    	w_2\sigma(w_5)-2 & w_2\sigma(w_6)+w_3\sigma(w_7)-8 & w_3\sigma(w_8) \\ 
    	0 & w_4\sigma(w_7) & w_4\sigma(w_8)-2
    	\end{smallmatrix}\right)\right\|_F^2.
    \end{aligned}
\end{equation*}

We use many common continuous activations including shifted Sigmoid \footnote{We choose the shifted Sigmoid activation as $\sigma(z) = 1/(1+e^{-z})-1/2$ to satisfy $\sigma(0)=0$.}, Tanh, LeakyReLU, ELU \cite{clevert2015fast} and ReLU, which all satisfy our activation assumption of $\sigma(0)=0$. 
We totally run $100$ trials using GD with a learning rate of $0.01$ until the training converges (up to $50000$ iterations) and record the distribution of convergent solutions in Table \ref{table:res3}. 

From Table \ref{table:res3}, except for ReLU, the training loss for other activations either approaches a global minimum, or traps into the valley we constructed. 
And we could observe that it is likely to fall into a spurious valley instead of a global minimum. 
Moreover, we discover that for ReLU activation, there exist $11$ final convergent solutions while only $5$ successful times approach global minima, showing a worse landscape compared to other activations.
Finally, the landscape visualization of some activations in Figure \ref{fig:deep-sparse-nonlinear-vis} also matches well with our theorem.
Overall, we could encounter severe landscape issues in SS nets from our theorems and experiments.

\begin{savenotes}
	\begin{table}[t]
		\renewcommand{\arraystretch}{0.8}
		\centering
		\begin{tabular}{|c|c|c|c|c|}
			\hline
			Shifted Sigmoid & Tanh & LeakyReLU & ELU & ReLU \\
			\hline
			$ \fbox{95},\mathbf{5} $ & $ \fbox{99},\mathbf{1} $ & $ \fbox{45},\mathbf{55} $ & $ \fbox{67},\mathbf{33} $ & \makecell{$17,15,\mathbf{5},19,6,$\\$6,6,8,\fbox{7},7,4$} \\
			\hline
		\end{tabular}
		\caption{Statistic of final convergent solutions during totally $100$ trials for different continuous activations. The number inside a square is the times trapped in our constructed valleys and the bold number is the times that the algorithm generates an approximate global minimum. Particularly, we discover totally 11 different solutions under ReLU.}
		\label{table:res3}
	\end{table}
\end{savenotes}

\section{Conclusion}\label{sec:conclusion}
We have investigated the landscape of sparse networks for several scenarios. 
For a sparse linear network with a dense final layer, we have shown that under some specific structures there are no spurious valleys.
For a sparse non-linear network with a dense wide final layer, there is no difference with the dense case in terms of spurious valleys.
Finally, we have shown that spurious valleys can exist for networks with both sparse layers.
These results  do reveal the negative impact of sparsity on the loss landscape of neural nets. 
Future research directions include providing more necessary conditions for sparse non-linear networks to have a benign landscape, and analyzing the optimization trajectory under popular gradient descent methods when applied to sparse networks.

\appendix

\section{Useless Connections and Nodes in Sparse Networks}\label{app:useless-connect}
We show several kinds of useless connections caused by sparsity or network pruning.

\begin{enumerate}
	\item Zero out-degree \uppercase\expandafter{\romannumeral1}: if a node has zero out-degree, we can eliminate its input connections because this node does not influence the network's output, e.g., $h^{(2)}_1$ in Figure \ref{fig:useless-node}.
	\item Zero out-degree \uppercase\expandafter{\romannumeral2}: if a node has zero out-degree after removing its output connections based on the criterion of Zero out-degree \uppercase\expandafter{\romannumeral1} in latter layers, we also can eliminate its input connections,
	e.g., $h^{(1)}_1$ in Figure \ref{fig:useless-node}. Though it owes one output connection, the connected node $h^{(2)}_1$ has zero out-degree. Hence, the connection can be removed, leading to zero out-degree. So we can eliminate the input connections of $h^{(1)}_1$ as well.
	\item Zero in-degree \uppercase\expandafter{\romannumeral1}: if a node has zero in-degree, we can eliminate its output connections when $\sigma(0)=0$ and no bias term is used, because this node only provides zero feature, e.g., $h^{(2)}_4$ and $h^{(1)}_4$ in Figure \ref{fig:useless-node}. However,  when the node has a bias term, then we can not remove output connections since the bias constant will propagate to subsequent layers, though such a node only provides the feature like $\sigma(b\cdot \bm{1}_n)$ for some $b \in \mathbb{R}$.
	\item Zero in-degree \uppercase\expandafter{\romannumeral2}: if a node has zero in-degree after removing its input connections based on the criterion of Zero in-degree \uppercase\expandafter{\romannumeral1} in former layers, we also can eliminate its output connections, e.g., $h^{(2)}_3$ in Figure \ref{fig:useless-node}. 
	Though it owes one input connection, the connected node $h^{(1)}_4$ has zero in-degree. Hence the connection can be removed, leading to zero in-degree. So we can eliminate the output connections of $h^{(2)}_3$ as well. Similarly, we can not remove output connections if the bias term is employed.
\end{enumerate}

In conclusion, we can remove hidden nodes that do not appear in any directed path from one input node to one output node. Then the remaining structure is \textit{effective}.

\def\layersep{1.5cm}
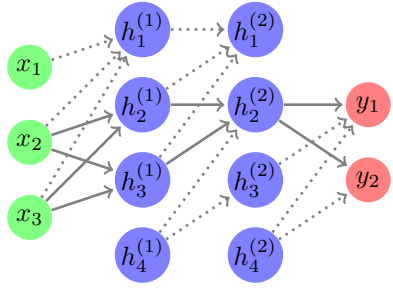
\begin{figure}[t]
	\centering
	\begin{tikzpicture}[shorten >=1pt,->,draw=black!50, node distance=\layersep]
	
	\tikzstyle{every pin edge}=[<-,shorten <=1pt]
	\tikzstyle{neuron}=[circle,fill=black!25,minimum size=17pt,inner sep=0pt]
	\tikzstyle{input neuron}=[neuron, fill=green!50];
	\tikzstyle{output neuron}=[neuron, fill=red!50];
	\tikzstyle{hidden neuron}=[neuron, fill=blue!50];
	\tikzstyle{annot} = [text width=4em, text centered]
	
	\foreach \name / \y in {1,...,3}
	\node[input neuron] (I-\name) at (0,-\y) {$x_{\y}$};
	
	\foreach \name / \y in {1,...,4}
	\path[yshift=0.5cm]
	node[hidden neuron] (H1-\name) at (\layersep,-\y cm) {$h^{(1)}_{\y}$};
	
	\foreach \name / \y in {1,...,4}
	\path[yshift=0.5cm]
	node[hidden neuron] (H2-\name) at (2*\layersep,-\y cm) {$h^{(2)}_{\y}$};
	
	\foreach \name / \y in {1,2}
	\path[yshift=-0.5cm]
	node[output neuron] (O-\name) at (3*\layersep,-\y cm) {$y_{\y}$};
	
	\foreach \source in {2,3}
	\foreach \dest in {2,3}
	\path (I-\source) edge [line width=1pt](H1-\dest);
	
	\foreach \source in {1,2,3}
	\foreach \dest in {1}
	\path (I-\source) edge [dotted, line width=1pt](H1-\dest);
	
	\foreach \source in {1,2,3}
	\foreach \dest in {1}
	\path (H1-\source) edge [left, dotted, line width=1pt, pos=0.7, above, align=center] node {} (H2-\dest);
	
	\foreach \source in {2,3}
	\foreach \dest in {2}
	\path (H1-\source) edge [line width=1pt] (H2-\dest);	
	
	\foreach \source in {4}
	\foreach \dest in {2,3}
	\path (H1-\source) edge [dotted, line width=1pt] (H2-\dest);		
	
	\foreach \source in {2}
	\foreach \dest in {1, 2}
	\path (H2-\source) edge [left, line width=1pt] (O-\dest);
	
	\foreach \source in {3}
	\foreach \dest in {1}
	\path (H2-\source) edge [left, dotted, line width=1pt] (O-\dest);
	
	\foreach \source in {4}
	\foreach \dest in {1,2}
	\path (H2-\source) edge [dotted, line width=1pt] (O-\dest);
	
	\node[annot,above of=H1-1, node distance=1cm] (hl1) {Hidden 1};
	\node[annot,above of=H2-1, node distance=1cm] (hl2) {Hidden 2};
	\node[annot,left of=hl1] {Input};
	\node[annot,right of=hl2] {Output};
	\end{tikzpicture}
	\caption{An example of sparse networks. Lines are connections of the original sparse network, dotted lines are useless connections that can be removed, and solid lines are effective connections.} \label{fig:useless-node}
	\vspace{-10pt}
\end{figure}

\section{Missing Proofs for Section \ref{section:general-linear}} \label{appendix:general-linear}

\subsection{Proof of Theorem \ref{thm:two-layer-linear}} \label{app:shallow-linear}
\begin{lemma}[Modified from \cite{venturi2019spurious}]\label{lemma:zero-path}
	For two matrices $U = [\bm{u}_1, \ldots, \bm{u}_m] \in \mathbb{R}^{m \times p}$ and $W = [\bm{w}_1, \ldots, \bm{w}_p]^\top \in \mathbb{R}^{p \times n}$, $p\geq n$. If $\mathrm{rank}(W)=r < n$, and $\bm{w}_1, \ldots, \bm{w}_r$ are linearly independent, then we can construct a $U^0$, such that $U^0W = U W$ and $U^0$ has $(p{-}r)$ zero columns.
\end{lemma}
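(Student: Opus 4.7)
\textbf{Proof plan for Lemma \ref{lemma:zero-path}.}
The plan is to exploit the linear dependence of the last $p-r$ rows of $W$ on the first $r$ rows, and to redistribute the corresponding column contributions of $U$ into its first $r$ columns so that the product $UW$ is preserved. This is essentially a bookkeeping argument; the only nontrivial ingredient is the rank hypothesis that lets us choose the expansion coefficients.

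First, I would write $UW$ as the sum of outer products
\[
UW \;=\; \sum_{j=1}^{p} \bm{u}_j \bm{w}_j^\top,
\]
separating the contributions of the first $r$ columns/rows from the remaining $p-r$. Using $\mathrm{rank}(W)=r$ and the hypothesis that $\bm{w}_1,\dots,\bm{w}_r$ are linearly independent, for every $j \in \{r+1,\dots,p\}$ there exist unique scalars $c_{j,1},\dots,c_{j,r}$ with $\bm{w}_j = \sum_{i=1}^{r} c_{j,i}\,\bm{w}_i$.

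Next I would substitute these expressions back into the outer-product sum and collect terms indexed by $i \in [r]$, obtaining
\[
UW \;=\; \sum_{i=1}^{r}\Bigl(\bm{u}_i + \sum_{j=r+1}^{p} c_{j,i}\,\bm{u}_j\Bigr)\bm{w}_i^\top.
\]
Then I would define $U_0 = [\tilde{\bm{u}}_1,\dots,\tilde{\bm{u}}_p]$ column-wise by setting $\tilde{\bm{u}}_i := \bm{u}_i + \sum_{j=r+1}^{p} c_{j,i}\,\bm{u}_j$ for $i \in [r]$ and $\tilde{\bm{u}}_j := \bm{0}_m$ for $j \in [r+1,p]$. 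By construction $U_0$ has exactly $p-r$ zero columns and $U_0 W = \sum_{i=1}^{r} \tilde{\bm{u}}_i \bm{w}_i^\top$, which matches the expression above for $UW$.

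There is no substantial obstacle here: the argument only uses that the last $p-r$ rows of $W$ lie in the row span of the first $r$, which is exactly the rank assumption. The only minor care is to note that the coefficients $c_{j,i}$ are uniquely determined by the linear independence of $\bm{w}_1,\dots,\bm{w}_r$, so $U_0$ is well-defined; and that the statement trivially allows $r=0$ (in which case $W=0$ and $U_0 = \bm{0}$ works) and $r=p$ (vacuously true, though the hypothesis $r<n\le p$ is assumed).
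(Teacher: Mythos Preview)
Your proof is correct and is essentially the same argument as the paper's, just written in outer-product notation rather than block-matrix notation: the paper packages your coefficients $c_{j,i}$ into a matrix $Q\in\mathbb{R}^{(p-r)\times r}$ with $W_2^\top = QW_1^\top$ and sets $U_0=(U_1+U_2Q\ \ \bm{0})$, which is exactly your $\tilde{\bm u}_i=\bm u_i+\sum_{j>r}c_{j,i}\bm u_j$ for $i\le r$ and $\tilde{\bm u}_j=\bm 0$ for $j>r$.
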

\begin{proof}
	From the condition, 
	$ U = ( \strut\smash{\stackrel{r}{U_1}} \ 
	\strut\smash{\stackrel{p-r}{U_2}} ), W = (
	\strut\smash{\stackrel{r}{W_1}} \ 
	\strut\smash{\stackrel{p-r}{W_2}} )^\top = \left(\begin{smallmatrix}
	I_r & \bm{0} \\
	Q & \bm{0} \\
	\end{smallmatrix}\right) \left(\begin{smallmatrix}
	W_1^\top \\
	\bm{0}
	\end{smallmatrix}\right)$, where $Q \in \mathbb{R}^{(p-r) \times r}$ because $W_1^\top$ is a basis of the row space of $W$.
	Then
	$ U W = \left[ (U_1 \ U_2) \left(\begin{smallmatrix}
	I_r & \bm{0} \\
	Q & \bm{0} \\
	\end{smallmatrix}\right)\right] \left(\begin{smallmatrix}
	W_1^\top \\
	\bm{0} \\
	\end{smallmatrix}\right) = (U_1+U_2Q \ \ \bm{0})\left(\begin{smallmatrix}
	W_1^\top \\
	\bm{0} \\
	\end{smallmatrix}\right) = (U_1+U_2Q \ \ \bm{0})W$.
	Therefore, we can choose $U^0 = (U_1+U_2Q \ \ \bm{0})$, which satisfies the requirement.
\end{proof}

\begin{lemma}\label{lemma:convex}
	Suppose $f(\bm{u})\colon \mathbb{R}^d \to \mathbb{R}$ is a convex function of $\bm{u}$, and $\bm{u}^* \in\mathop{\arg\min}_{\bm{u}}f(\bm{u})$. Then given any $\bm{u}_0\in\mathbb{R}^d$, we have $f((1-t)\bm{u}_0 + t \bm{u}^*)$ as a function of $t \in [0,1]$ is non-increasing.
\end{lemma}
\begin{proof}
    For any $1 \geq t > s \geq 0$, by the convexity of $f(\cdot)$ and $f(\bm{u}_*)=\min_{\bm{u}}f(\bm{u})$, we have 
    \[ f((1-s) \bm{u}_0 + s \bm{u}^*) \geq \frac{1-t}{1-s}\cdot f((1-s) \bm{u}_0 + s \bm{u}^*) + \frac{t-s}{1-s}\cdot f(\bm{u}^*) \geq f((1-t)\bm{u}_0 + t \bm{u}^*). \]
    Thus, $f((1-t)\bm{u}_0 + t \bm{u}^*)$ as a function of $t \in [0,1]$ is non-increasing.
\end{proof}

Now we turn back to the proof of Theorem \ref{thm:two-layer-linear}.

\begin{proof}
	We prove the results of no spurious valleys by constructing the path that satisfies {\PP}.
	\begin{enumerate}
		\item[1)] If $p_i\geq d_i$, we reformulate the objective as 
		\begin{equation*}
    		\begin{aligned}
    		    &\min_{\substack{U_1, \dots, U_s, \\ W_1, \dots, W_s}} L(U_1, \dots, U_s, W_1, \dots, W_s) := \frac{1}{2}\left\| (U_1 \ \cdots \ U_s) \left(\begin{smallmatrix}
    		W_1    & \dots  & \bm{0} \\
    		\vdots & \ddots & \vdots \\
    		\bm{0} & \dots  &    W_s \\
    		\end{smallmatrix}\right)Z-Y\right\|_F^2, Z := \left(\begin{smallmatrix}
    		Z_1 \\ \vdots \\ Z_s
    		\end{smallmatrix}\right)
    		\end{aligned}
		\end{equation*}
		Now we construct a non-increasing path to a global minimum. If for certain $i$, $\mathrm{rank}(W_i)=r<d_i$, we denote $W_i=[\bm{w}_1,\dots,\bm{w}_{p_i}]^\top$, and w.l.o.g., $\bm{w}_1\dots, \bm{w}_r$ is linearly independent.
		Then from Lemma \ref{lemma:zero-path}, we are able to find $U_i^0$, such that $U_i^0W_i = U_i W_i$, and the last $p_i-r$ columns of $U_i^0$ are zero. We first fix $W_i$ and take a linear path $U_i(t) = (1-t)U_i+t U_i^0, 0 \leq t \leq 1$ with invariant loss. 
		Then we fix $U_i^0$ and take another linear path $W_i(t')=(1-t')W_i+t' W_i^0, 0 \leq t' \leq 1$, where $W_i^0$ has the same $r$ rows as $W_i$ and $\mathrm{rank}(W_i^0)=d_i$ (we can always extend linearly independent vectors to a basis due to $p_i \geq d_i$).
		Hence, we could reach a point $(U_1^0, \dots, U_s^0, W_1^0, \dots, W_s^0)$ with all full column rank $W_i^0$s with unchanged loss.

		Next, note that a global minimizer $U_i^*, W^*_i$s satisfies $\left(U_1^*W_1^*, \dots, U_s^* W_s^*\right)=Y Z^+$, where $Z^+$ is the pseudoinverse of $Z$. Hence we could obtain a global minimizer by making $U_i^*$s satisfy $\left(U_1^*W^0_1, \dots, U_s^* W_s^0\right)=Y Z^+$ since $W_i$s have full column rank.
		Finally, we take $U_i(t'') = (1-t'')U_i^0 + t'' U_i^*, 0 \leq t'' \leq 1, \forall i \in[s]$. Since $L(U_i, W_i, i\in[s])$ is convex related to $U_i$s, the constructed path is a non-increasing path to the global minimum by Lemma \ref{lemma:convex}.
		Hence, there are no spurious valleys.
		\item[2)] If $Z_i Z_j^\top=\bm{0}, \forall i,j \in[s], i\neq j$, then $Z_i$ and $Z_{j}$ share no same rows of $X$. Additionally, each row in data $X$ has connections to the first layer weight $W$, thus $Z_{1}, \ldots, Z_{s}$ are an arrangement of the rows of $X$. Then we only need to consider the objective:
		\begin{equation*}
		\begin{aligned}
		\min_{U, W_1, \dots, W_s} L(U, W_1, \dots, W_s) &= \frac{1}{2}\left\|\sum_{i=1}^s U_i W_i Z_i -Y\right\|_F^2 \stackrel{(i)}{=} \frac{1}{2}\sum_{i=1}^{s} \left\|U_i W_i Z_i-Y\right\|_F^2-\frac{s-1}{2} \left\|Y\right\|_F^2,
		\end{aligned}
		\end{equation*}
		where $(i)$ uses the assumption $Z_i Z_j^\top=\bm{0}, \forall i,j \in[s], i\neq j$ again.
		We see that the objective has already been separated into $s$ parts, while each part is a two-layer dense linear network. Based on \cite[Theorem 11]{venturi2019spurious}, dense linear networks have no spurious valleys. Hence the original SD network has no spurious valleys.
		
		\item[3)] If $d_y=1$, we can simplify $U=\left(u_1,\dots, u_p\right)\in\mathbb{R}^{1\times p}$, and with some abuse of notation, we set the rows of $W_1, \dots W_s$ in sequence as $\bm{w}_1^\top, \dots, \bm{w}_p^\top$ ($\bm{w}_i$s may not have same dimensions) with corresponding data $Z_1, \dots, Z_p$ (some of $Z_i$s could be same), then we can reformulate the original problem as
		\begin{equation*}\label{eq: pruned-problem}
		\min_{u_i, \bm{w}_i, i\in[p]} L(U, W) = \frac{1}{2}\left\| \sum_{i=1}^{p} u_i \bm{w}_i^\top Z_i-Y \right\|_2^2.
		\end{equation*}
		From any initialization, if certain $u_i=0, i\in[p]$, we can construct linear paths from $\bm{w}_i$ to $\bm{w}_i'=\bm{0}$, and then from $u_i=0$ to $u_i'=1$ with fixed other weights, where the loss is invariant. Therefore, we can assume $u_i \neq 0, \forall i\in[p]$. Then the objective is convex related to the sparse first-layer weight $\widetilde{W}$. Hence, we can construct a non-decreasing path to a global minimum by Lemma \ref{lemma:convex}. 
	\end{enumerate}
\end{proof}

\subsection{Proof of Theorem \ref{thm:linear-deep}} \label{app:ext-linear}
\begin{proof}
    The objective of an $L + 1$-layer network is $ \min_{V_i, i\in[L], W} \frac{1}{2}\left\|V_L\cdots V_1 W X - Y\right\|_F^2$. We first focus on the sparse first layer weight (i.e., $W$ is sparse) as shallow linear networks.
    We prove the results of no spurious valleys by constructing the path that satisfies {\PP} following \citet[Lemma 2]{venturi2019spurious}.
	\begin{enumerate}
		\item[1)] If $p_i\geq d_i$, we reformulate the objective as $(P)$ below:
		\begin{equation*}
		\begin{aligned}
		\min_{\substack{W_i, i\in[s] \\ V_j, j\in[L]}} L(V_L, \dots, V_1, W_1,\dots, W_s) 
		:= \frac{1}{2}\left\|
		V_L\cdots V_1\left(\begin{smallmatrix}
		W_1    & \dots  & \bm{0} \\
		\vdots & \ddots & \vdots \\
		\bm{0} & \dots  &    W_s \\
		\end{smallmatrix}\right)\left(\begin{smallmatrix}
		Z_1 \\ \vdots \\ Z_s
		\end{smallmatrix}\right)-Y\right\|_F^2.
		\end{aligned}
		\end{equation*}
		Invoked from the proof of 1) in Theorem \ref{thm:two-layer-linear}, we can assume $W_i$s have full column rank, then we can take 
		$\widetilde{V}_1 := V_1\left(\begin{smallmatrix}
		W_1    & \dots  & \bm{0} \\
		\vdots & \ddots & \vdots \\
		\bm{0} & \dots  &    W_s \\
		\end{smallmatrix}\right)$ as new variable. Note that choosing $V_1=V\left(\begin{smallmatrix}
		W_1^+  & \dots  & \bm{0} \\
		\vdots & \ddots & \vdots \\
		\bm{0} & \dots  &  W_s^+ \\
		\end{smallmatrix}\right)$
		for any matrix $V$. We can see there has no constraint for $\widetilde{V}_1$. 
		Thus, the original problem shares the same global minimum of $(P_1)$ below:
		\begin{equation*}
		\begin{aligned}
		\min_{V_L, \dots,V_2,\widetilde{V}_1} \widetilde{L}(V_L\dots,V_2,\widetilde{V}_1) := \frac{1}{2} \left\|	V_L\cdots V_2\widetilde{V}_1\left(\begin{smallmatrix}
		Z_1 \\ \vdots \\ Z_s
		\end{smallmatrix}\right)-Y\right\|_F^2.
		\end{aligned}
		\end{equation*}
		From \citet[Theorem 11]{venturi2019spurious}, we can construct a non-decreasing path to a global minimum of $(P_1)$ from any initialization. Thus we could obtain a non-decreasing path to a global minimum for $(P)$ as well.
		
		\item[2)] If $d_y=1$, we directly consider the objective with all sparse layers as 
		\begin{equation*}
		\min_{\substack{\bm{w}_i, i\in[p] \\ \widetilde{V}_j, j\in[L]}} 
		\frac{1}{2}\left\|\widetilde{V}_L\cdots \widetilde{V}_1\left(\begin{smallmatrix}
		\bm{w}_1^\top Z_1 \\
		\vdots \\
		\bm{w}_p^\top Z_p \\
		\end{smallmatrix}\right)-Y\right\|_F^2,
		\end{equation*}
		where $\widetilde{V}_i \in \mathbb{R}^{p_i\times p_{i-1}}, \forall i \in [L]$ are sparse layer wights and $p_0=p$, $\bm{w}_i$ is the remaining parameters in the $i$-th row of the first layer weight, and $Z_i$ is the corresponding data matrix. Note that $(\widetilde{V}_L\cdots \widetilde{V}_1) \in \mathbb{R}^{1 \times p}$ due to $d_y=1$.
		If the initialization satisfies $(\widetilde{V}_L\cdots \widetilde{V}_1)_j \neq 0, \forall j \in[p]$, then the objective is convex related to $\bm{w}_i, i\in[p]$. 
		Thus we can construct a non-increasing path to its global minimum by Lemma \ref{lemma:convex}. Otherwise, for certain $j \in [p]$, such that $(\widetilde{V}_L\cdots \widetilde{V}_1)_j = 0$, we fix $j$ and prove by induction for $L$ that we can construct a path with the invariant loss to a point $(\widetilde{V}_L', \dots, \widetilde{V}_1', \bm{w}_1, \dots, \bm{w}_j', \dots, \bm{w}_p)$ such that $\bm{w}_j'= \bm{0}$ and $\widetilde{V}_L'\cdots \widetilde{V}_1'= \widetilde{V}_L\cdots \widetilde{V}_1 + a\bm{e}_j^\top$ for some $a\neq 0$, i.e., $\widetilde{V}_L'\cdots \widetilde{V}_1'$ only change the $j$-th dimension of $\widetilde{V}_L\cdots \widetilde{V}_1$ to be nonzero.
		The case $L = 1$ has shown in shallow linear network. 
		Now suppose the result hold for $L - 1$ layers, and we consider the case of $L$ layers. 
		
		\textbf{Case 1.} Suppose for some $i\in[p_1]$, $(\widetilde{V}_L\cdots \widetilde{V}_2)_i \neq 0$ and $(\widetilde{V}_1)_{i,j}$
		is not pruned. We first construct a linear path from $\bm{w}_j$ to $\bm{w}_j'=\bm{0}$ with fixed other weights, which gives the invariant loss due to the hypothesis that $(\widetilde{V}_L\cdots \widetilde{V}_1)_j = 0$. Then we can construct another linear path starting from $\widetilde{V}_1$ to $\widetilde{V}_1'$ with $(\widetilde{V}_1')_{\cdot, j} = \bm{e}_i$ and the same other columns.
		The loss is still unchanged because $\bm{w}_j'=\bm{0}$.
		Now we obtain $(\widetilde{V}_L\cdots \widetilde{V}_2\widetilde{V}_1')_j = (\widetilde{V}_L\cdots \widetilde{V}_2) \cdot (\widetilde{V}_1')_{\cdot, j} = (\widetilde{V}_L\cdots \widetilde{V}_2)_i \neq 0$, and $(\widetilde{V}_L\cdots \widetilde{V}_2\widetilde{V}_1')_k = (\widetilde{V}_L\cdots \widetilde{V}_1)_k, \forall k \neq j$, which satisfies our requirement.
		
		\textbf{Case 2.} Otherwise, for all $i \in [p_1]$, such that $(\widetilde{V}_1)_{i,j}$
		is not pruned, we have $(\widetilde{V}_L\cdots \widetilde{V}_2)_i = 0$.
		Now we view $X_{new} := \left(\begin{smallmatrix}
		\bm{w}_1^\top Z_1 \\
		\vdots \\
		\bm{w}_p^\top Z_p \\
		\end{smallmatrix}\right)$, and rearrange the data to define $\left(Z_{new}\right)_{t}, t\in[p_1]$ similarly as Eq.~\eqref{eq:objective}. 
		Then we have
		\[ \widetilde{V}_L \cdots \widetilde{V}_1X_{new} = 
		\widetilde{V}_L \cdots \widetilde{V}_2 \left(\begin{smallmatrix}
		\left(\widetilde{V}_1 \right)_{1,\cdot} \left(Z_{new}\right)_{1} \\
		\vdots \\
		\left(\widetilde{V}_1 \right)_{p_1,\cdot} \left(Z_{new}\right)_{p_1} \\
		\end{smallmatrix}\right). \]
	    For each $i$ that satisfies $(\widetilde{V}_L\cdots \widetilde{V}_2)_i = 0$, from the inductive hypothesis of $L-1$ layers, we can construct a path to a point $(\widetilde{V}_L', \dots, \widetilde{V}_2', (\widetilde{V}_1)_{1,\cdot}^\top, \dots, (\widetilde{V}_1')_{i,\cdot}^\top, \dots, (\widetilde{V}_1)_{p_1, \cdot}^\top)$ that only changes $i$-th dimension of $\widetilde{V}_L\cdots \widetilde{V}_2$ to be nonzero and makes $(\widetilde{V}_1')_{i, \cdot}=\bm{0}$ with the invariant loss. 
		Thus, $(\widetilde{V}_1')_{i, j}=0$.
		Repeating the above argument for all $i$, we obtain $(\widetilde{V}_1')_{\cdot, j}=\bm{0}$. Finally, we construct a loss-invariant linear path from $\bm{w}_j$ to $\bm{w}_j'=\bm{0}$ due to $(\widetilde{V}_1')_{\cdot, j}=\bm{0}$. Subsequently, noting that $\bm{w}_j'=\bm{0}$, we can construct another loss-invariant linear path from $(\widetilde{V}_1')_{\cdot, j}$ to $(\widetilde{V}_1'')_{\cdot, j}=\bm{e}_k$ by choosing a $k \in [p_1]$ that $(\widetilde{V}_1)_{k,j}$ is not pruned and $(\widetilde{V}_L'\cdots \widetilde{V}_2')_k \neq 0$, which completes our inductive step because $(\widetilde{V}_L'\cdots \widetilde{V}_2'\widetilde{V}_1'')_j = (\widetilde{V}_L'\cdots \widetilde{V}_2')_k \neq 0$.
		
		Combining \textbf{Case 1} and \textbf{Case 2}, we finish our induction. Therefore, we can always reach a point with $(\widetilde{V}_L\cdots \widetilde{V}_1)_j\neq 0, \forall j \in[p]$, and construct a non-increasing path with only adjusting $\bm{w}_i$s to a global minimum by Lemma \ref{lemma:convex}.
	\end{enumerate}
\end{proof}

\section{Missing Proofs for Section \ref{sec:non-linear}} \label{app:non-linear}
\subsection{Proof of Theorem \ref{thm:infinite-dim-non-linear}}

\begin{proof}
    The proof is partially inspired by \citet{venturi2019spurious}, but we extend the result to sparse networks.
	Since $\dim^*(\sigma, X) < \infty $, then $\forall i\in[s]$, $\dim^*(\sigma, Z_i) < \infty $. Thus $\forall \bm{w}, \bm{z}\in\mathbb{R}^{d_i}$, we have $\sigma(\bm{w}^\top\bm{z})=\langle\bm{\psi}_{i}(\bm{w}), \bm{\varphi}_{i}(\bm{z})\rangle$ for some maps $\bm{\psi}_i, \bm{\varphi}_i: \mathbb{R}^{d_i}\to\mathbb{R}^{p_i}$ with $r_i := \dim^*(\sigma, Z_i), i \in [s]$. Denote 
	\[ \bm{\psi}(\widetilde{W}) = \left(\begin{smallmatrix}
	\bm{\psi}_{1}(W_{1}) & \cdots & \bm{0}  \\
	\vdots & \ddots & \vdots \\
	\bm{0} & \cdots & \bm{\psi}_{s}(W_{s}) \\
	\end{smallmatrix}\right), \text{ where } \bm{\psi}_{i}(W_{i}) \in \mathbb{R}^{p_i \times r_i}, \; \forall i \in [s]. \]
	Then the output of network becomes $ U \bm{\psi}(\widetilde{W})
	\left(\begin{smallmatrix}
	\bm{\varphi}_{1}(Z_1) \\
	\vdots \\
	\bm{\varphi}_{s}(Z_s)  \\
	\end{smallmatrix}\right)$. 
	Once $\bm{\psi}(\widetilde{W})$ has full column rank, we only need to change $U$ to obtain a non-increasing path to the zero loss by Lemma \ref{lemma:convex}. Otherwise, we can reach a full column rank $\bm{\psi}(\widetilde{W})$ by adjusting $U_i$s and $W_i$s following the proof of \citet[Corollary 10]{venturi2019spurious} or Lemma \ref{lemma:zero-path}. 
\end{proof}

\subsection{Proof of Theorem \ref{thm:non-linear}} 
\begin{lemma}[\citet{dang2015complex, mityagin2015zero}] \label{lemma:real-ana}
	If $f:\mathbb{R}^n\to \mathbb{R}$ is a real analytic function which is not identically zero,
	then $ \{x \in \mathbb{R}^n| f(x) = 0\} $ has Lebesgue measure zero.
\end{lemma}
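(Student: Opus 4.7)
The plan is to prove the lemma by induction on the dimension $n$, combined with a Fubini-type slicing argument.

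\textbf{Base case ($n=1$).} I would first argue that the zero set of a nonzero real analytic function $f\colon\mathbb{R}\to\mathbb{R}$ consists of isolated points, and is therefore countable and of Lebesgue measure zero. The key observation is that the set $S=\{x\in\mathbb{R}: f^{(k)}(x)=0\text{ for all }k\geq 0\}$ is closed (as an intersection of closed sets) and open (by local representability of $f$ via its Taylor series around any point of $S$). Since $\mathbb{R}$ is connected and $f\not\equiv 0$, we get $S=\emptyset$, so every zero has finite order and is thus isolated.

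\textbf{Inductive step.} Suppose the statement holds in dimension $n-1$. Write $x=(x_1,x')$ with $x'\in\mathbb{R}^{n-1}$, and define
\[
B=\{x'\in\mathbb{R}^{n-1}: f(\cdot,x')\equiv 0 \text{ on }\mathbb{R}\}.
\]
By Fubini's theorem applied to the characteristic function of $\{f=0\}$,
\[
\mathrm{meas}(\{f=0\})=\int_{\mathbb{R}^{n-1}}\mathrm{meas}\bigl(\{x_1:f(x_1,x')=0\}\bigr)\,dx'.
\]
For $x'\notin B$, the inner measure is $0$ by the base case. For $x'\in B$, the inner measure may be infinite, but any integral over a set of measure zero contributes $0$, so it suffices to prove $\mathrm{meas}(B)=0$.

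\textbf{Controlling $B$.} I would express $B$ as an intersection of zero sets of analytic functions on $\mathbb{R}^{n-1}$. Namely, $x'\in B$ iff the Taylor series of $x_1\mapsto f(x_1,x')$ at $x_1=0$ vanishes identically (which, by analytic continuation along $\mathbb{R}$, implies $f(\cdot,x')\equiv 0$). Hence
\[
B=\bigcap_{k\geq 0}\{x':h_k(x')=0\},\qquad h_k(x')=\partial_1^{k}f(0,x').
\]
Each $h_k$ is real analytic on $\mathbb{R}^{n-1}$. If every $h_k$ were identically zero, then $f(x_1,x')\equiv 0$ in a neighborhood of $x_1=0$ for every $x'$, and by analytic continuation in $x_1$ and then in all variables, we would get $f\equiv 0$, contradicting the hypothesis. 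So some $h_{k_0}\not\equiv 0$, and by the inductive hypothesis $\mathrm{meas}(\{h_{k_0}=0\})=0$. Since $B\subseteq \{h_{k_0}=0\}$, we conclude $\mathrm{meas}(B)=0$.

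\textbf{Main obstacle.} The only subtle point I anticipate is the justification that ``all $h_k\equiv 0$'' forces $f\equiv 0$ globally: local vanishing of the $x_1$-Taylor series at $0$ gives vanishing only on a neighborhood in $x_1$, and one must invoke both analyticity of the slices (to extend in $x_1$) and joint analyticity on $\mathbb{R}^n$ (to extend off the hyperplane $x_1=0$). Everything else is a clean induction plus Fubini, provided one handles the potentially infinite slice measure on $B$ by noting that integration over a null set always yields $0$.
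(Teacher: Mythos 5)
Your proof is correct. The paper does not prove this lemma itself—it is quoted from \citet{dang2015complex, mityagin2015zero}—and your argument (induction on $n$, Fubini on the slices, and reduction of the exceptional set $B$ to the zero set of some nonvanishing analytic Taylor coefficient $h_{k_0}$ in $n-1$ variables) is precisely the standard proof given in the cited reference of Mityagin, with the one-dimensional base case and the measurability/null-set bookkeeping handled correctly.
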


Now we turn to the proof of Theorem \ref{thm:non-linear}.

\begin{proof}
	Suppose $\mathrm{rank}(\sigma(\widetilde{W}X))=t-1<n$, and without loss of generality, assume that the first $t-1$ rows of $\sigma(\widetilde{W}X) \in \mathbb{R}^{p \times n}$ are linearly independent. 
	Then from Lemma \ref{lemma:zero-path}, we can construct a loss-invariant linear path from $(U,\widetilde{W})$ to $(U', \widetilde{W})$ where $U'$ has $p-t+1$ zero columns. We only need to show that $\sigma(\widetilde{W}X)$ can become full column rank after replacing the last $p-t+1$ rows of $\widetilde{W}$ while the loss is unchanged due to zero columns in $U'$.
	
	First, we show that there exists $\bm{w}_t$ such that the first $t$ rows of  $\sigma(\widetilde{W}X)$ are linearly independent, where $\bm{w}_t$ is the $t$-th row of $\widetilde{W}$.
	We consider the determinant of the first $t$ rows and columns of $\sigma(\widetilde{W}X)$ as
	\[ g(\bm{w}_t):=\det \left(\begin{smallmatrix}
	\sigma(\bm{w}_1^\top (Z_1)_{\cdot, 1}) & \dots & \sigma(\bm{w}_1^\top (Z_1)_{\cdot, i}) & \dots & \sigma(\bm{w}_1^\top (Z_1)_{\cdot, t}) \\
	\vdots & \ddots & \vdots & \ddots &	\vdots \\
	\sigma(\bm{w}_{t}^\top (Z_t)_{\cdot, 1}) & \dots & \sigma(\bm{w}_{t}^\top (Z_t)_{\cdot, i}) & \dots & \sigma(\bm{w}_{t}^\top (Z_t)_{\cdot, t}) \\
	\end{smallmatrix}\right). \]
	
	Then $g(\bm{w}_t)$ is real analytic function from Assumption \ref{ass:ac}. 
	If $g(\cdot)\not\equiv 0 $, we could find some $\bm{w}_t$ with $g(\bm{w}_t)\neq 0$ based on Lemma \ref{lemma:real-ana}, which already shows that the first $t$ rows of $\sigma(\widetilde{W}X)$ are linearly independent.
	Otherwise, $g(\cdot) \equiv 0 $, then we have
	\begin{align*}
		\frac{\partial^{k} g(\bm{w}_t)}{\partial (\bm{w}_t)_1^k} = \det \left(\begin{smallmatrix}
		\sigma(\bm{w}_1^\top (Z_1)_{\cdot, 1}) & \dots & \sigma(\bm{w}_1^\top (Z_1)_{\cdot, i}) & \dots & \sigma(\bm{w}_1^\top (Z_1)_{\cdot, t}) \\
		\vdots & \ddots & \vdots & \ddots & \vdots \\
		\sigma(\bm{w}_{t-1}^\top (Z_{t-1})_{\cdot, 1}) & \dots & \sigma(\bm{w}_{t-1}^\top (Z_{t-1})_{\cdot, i}) & \dots & \sigma(\bm{w}_{t-1}^\top (Z_{t-1})_{\cdot, t}) \\
		(Z_t)_{1, 1}^k\sigma^{(k)}(\bm{w}_t^\top (Z_t)_{\cdot, 1}) & \dots & (Z_t)_{1, i}^k\sigma^{(k)}(\bm{w}_t^\top (Z_t)_{\cdot, i}) & \dots & (Z_t)_{1, t}^k\sigma^{(k)}(\bm{w}_t^\top (Z_t)_{\cdot, t}) \\
		\end{smallmatrix}\right) \equiv 0.
	\end{align*}
	
	Since the first $t{-}1$ rows of $\sigma(\widetilde{W}X)$ are linearly independent, we obtain that the last row of $\frac{\partial^{k} g(\bm{w}_t)}{\partial (\bm{w}_t)_1^k}$ can be expressed by a linear combination of the remaining $t{-}1$ rows. 
	Thus, the last rows of $\frac{\partial^{l_i} g(\bm{w}_t)}{\partial (\bm{w}_t)_1^{l_i}}, i \in [t]$ are linearly dependent, showing that for all
	$ \bm{w}_t$,
	\begin{equation}\label{eq:det}
	\begin{aligned}
	\left(\begin{smallmatrix}
	(Z_t)_{1, 1}^{l_1}\sigma^{(l_1)}(\bm{w}_t^\top (Z_t)_{\cdot, 1}) &  \dots & (Z_t)_{1, t}^{l_1}\sigma^{(l_1)}(\bm{w}_t^\top (Z_t)_{\cdot, t})\\
	\vdots & \ddots & \vdots \\
	(Z_t)_{1, 1}^{l_t}\sigma^{(l_t)}(\bm{w}_t^\top (Z_t)_{\cdot, 1}) & \dots & (Z_t)_{1, t}^{l_t}\sigma^{(l_t)}(\bm{w}_t^\top (Z_t)_{\cdot, t}) \\
	\end{smallmatrix}\right) = 0.
	\end{aligned}
	\end{equation}
	Since $l_1, \dots, l_n \in \mathbb{N} $ form an arithmetic sequence, we denote $d= l_1-l_0\in\mathbb{N}_+$ and choose $\bm{w}_t=\bm{0}$, then the left side of Eq. (\ref{eq:det}) equals to 
	\[ \prod_{i=1}^{t}\sigma^{(l_i)}(0) \cdot \prod_{i=1}^{t} \left(Z_t\right)_{1, i}^{l_1} \cdot \prod_{1 \leq i < j \leq t}\left[(Z_t)_{1, j}^d-(Z_t)_{1, i}^d\right], \]
	where we use the property of Vandermonde matrix. However, Assumption \ref{ass:ac} implies that $\sigma^{(l_1)}(0), \dots,\sigma^{(l_t)}(0) \neq 0$. Furthermore, Assumption \ref{ass:data} implies $(Z_t)_{1, i}^d\neq  (Z_t)_{1, j}^d$ and $ (Z_t)_{1, i}^{l_1}\neq 0$ since $\left(Z_t\right)_{1, \cdot}$ includes all different non-zero feature (instead of $\bm{1}^\top$). 
	Therefore, we obtain contradiction, and we could find $\bm{w}_t$ to make the first $t$ rows of $\sigma(\widetilde{W}X)$ are linearly independent,  
	
	Second, we could repeat the above step until $\sigma(\widetilde{W}X)$ has full column rank. Moreover, previous argument also shows that $\mathrm{rank}(\sigma(\widetilde{W}X))=n, a.s.$ Particularly, any $n$ rows of $\sigma(\widetilde{W}X)$ have full rank with measure one.
	
	Now from $U'$ has $p - t + 1$ zero columns, we can adjust the $i$-th row of $\widetilde{W}$ with $i\geq t$ arbitrarily. Hence, we could reach some $\widetilde{W}'$ with the full column rank $\sigma(\widetilde{W}'X)$ according to the above argument. Then we can approach zero loss by a linear path from $U'$ to $U''=Y[\sigma(\widetilde{W}'X)]^+$ with non-increasing loss from Lemma \ref{lemma:convex}.
\end{proof}	

\subsection{Proof of Theorem \ref{thm:deep-non-linear}}
\begin{lemma}[Lemma 3 of \citet{li2018benefit}] \label{lemma:dawei}
	Suppose $\sigma(\cdot)$ is a non-constant analytic function.
	Then given $\bm{a}, \bm{b} \in \mathbb{R}^d$, the set $\Omega= \{\bm{z} \in \mathbb{R}^d | \sigma(\bm{a}^\top \bm{z}) = \sigma(\bm{b}^\top\bm{z}), \bm{a}\neq \bm{b}\}$ has zero measure.
\end{lemma}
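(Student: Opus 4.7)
The strategy is to reduce the claim to Lemma \ref{lemma:real-ana}. Define $f: \mathbb{R}^d \to \mathbb{R}$ by $f(\bm{z}) := \sigma(\bm{a}^\top\bm{z}) - \sigma(\bm{b}^\top\bm{z})$. Since $\sigma$ is real analytic and the maps $\bm{z} \mapsto \bm{a}^\top\bm{z}$ and $\bm{z} \mapsto \bm{b}^\top\bm{z}$ are linear (hence real analytic), $f$ is a real analytic function on $\mathbb{R}^d$. The set $\Omega$ is precisely the zero set of $f$, so by Lemma \ref{lemma:real-ana} it suffices to exhibit a single point where $f$ is non-zero, or equivalently to show $f \not\equiv 0$.

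I would establish $f \not\equiv 0$ by contradiction. Suppose $f(\bm{z}) = 0$ for every $\bm{z} \in \mathbb{R}^d$. For any fixed $\bm{v} \in \mathbb{R}^d$, set $\alpha := \bm{a}^\top\bm{v}$ and $\beta := \bm{b}^\top\bm{v}$. Restricting $f$ to the line $\bm{z} = t\bm{v}$ yields the scalar identity $\sigma(\alpha t) \equiv \sigma(\beta t)$ in $t \in \mathbb{R}$. Applying $\frac{d^k}{dt^k}\big|_{t=0}$ to both sides gives $\alpha^k \sigma^{(k)}(0) = \beta^k \sigma^{(k)}(0)$, and since $\sigma^{(k)}(0) \neq 0$ we obtain the polynomial identity $(\bm{a}^\top\bm{v})^k = (\bm{b}^\top\bm{v})^k$ valid for every $\bm{v} \in \mathbb{R}^d$.

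The finishing step factors $x^k - y^k$ over the integral domain $\mathbb{R}[\bm{v}]$, so vanishing of $(\bm{a}^\top\bm{v})^k - (\bm{b}^\top\bm{v})^k$ forces one of the factors to be identically zero. For $k$ odd, the only viable vanishing factor is the linear one $\bm{a}^\top\bm{v} - \bm{b}^\top\bm{v}$ (the complementary sum $\sum_{j=0}^{k-1}(\bm{a}^\top\bm{v})^{k-1-j}(\bm{b}^\top\bm{v})^{j}$ is a non-zero polynomial whenever $\bm{a}$ or $\bm{b}$ is non-zero, and the case $\bm{a}=\bm{b}=\bm{0}$ is excluded by $\bm{a} \neq \bm{b}$), so $\bm{a} = \bm{b}$, a contradiction. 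For $k$ even, refining the factorization to $x^k - y^k = (x-y)(x+y) R(x,y)$ forces $\bm{a} = \pm\bm{b}$.

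The main obstacle is the residual case $k$ even and $\bm{a} = -\bm{b}$: this scenario would say $\sigma(\bm{a}^\top\bm{z}) = \sigma(-\bm{a}^\top\bm{z})$ for all $\bm{z}$, which for $\bm{a} \neq \bm{0}$ amounts to $\sigma$ being even on $\mathbb{R}$, something the bare single-derivative hypothesis does not rule out. I would close it by exploiting the setting in which the lemma is actually invoked in this paper (Assumption \ref{ass:ac} supplies an arithmetic progression of indices with non-vanishing derivatives at the origin). Choosing any odd index $l_i$ from that progression, the argument above re-run with $l_i$ in place of $k$ forces $\bm{a} = \bm{b}$ and contradicts $\bm{a} \neq \bm{b}$. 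The clean way to present the lemma is therefore to flag this structural requirement on $\sigma$ alongside the stated hypothesis, and then execute the contradiction argument uniformly.
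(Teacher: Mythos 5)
The paper never proves this lemma---it is imported by citation from \citet{li2018benefit}---so there is no internal proof to compare against; your reconstruction (reduce to Lemma~\ref{lemma:real-ana} by showing $f(\bm{z})=\sigma(\bm{a}^\top\bm{z})-\sigma(\bm{b}^\top\bm{z})\not\equiv 0$, via $k$-fold differentiation along lines $\bm{z}=t\bm{v}$ and the resulting polynomial identity $(\bm{a}^\top\bm{v})^k=(\bm{b}^\top\bm{v})^k$) is the natural route and is correct as far as it goes. The important point is that the obstruction you isolate is genuine and is a defect of the statement, not of your argument: with only the hypothesis that \emph{some} $\sigma^{(k)}(0)\neq 0$, the lemma is false. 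Take $\sigma(z)=z^2$ or $\sigma(z)=\cosh z$, any $\bm{a}\neq\bm{0}$, and $\bm{b}=-\bm{a}$; then $\sigma(\bm{a}^\top\bm{z})=\sigma(\bm{b}^\top\bm{z})$ for every $\bm{z}$, so $\Omega=\mathbb{R}^d$ has full measure. Since all odd-order derivatives of an analytic $\sigma$ vanishing at $0$ is equivalent (by the identity theorem) to $\sigma$ being an even function, the clean corrected hypothesis is ``$\sigma^{(k)}(0)\neq 0$ for some \emph{odd} $k$,'' i.e.\ $\sigma$ is not even; under that hypothesis your unique-factorization step forces $\bm{a}=\bm{b}$ and the contradiction closes. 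The general statement one actually gets from the hypothesis as written is: $\Omega$ has measure zero whenever $\bm{a}\neq\pm\bm{b}$.

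Two caveats on your proposed patch. First, Assumption~\ref{ass:ac} does not guarantee an odd index: the arithmetic progression $l_1,\dots,l_n$ can consist entirely of even integers (even $l_1$ and even common difference, as happens for $\cosh$), so ``pick an odd $l_i$'' is not available in general; it does work for the activations the paper names (Sigmoid, Tanh, Softplus), but the lemma should be restated with the odd-derivative hypothesis rather than repaired through Assumption~\ref{ass:ac}. Second, note where the lemma is actually invoked: the proof of Theorem~\ref{thm:deep-non-linear} applies it to $\sigma^2$, which is an even function whenever $\sigma$ is odd (e.g.\ Tanh). There the $\bm{a}=-\bm{b}$ degeneracy cannot be excluded by any condition on the activation and must instead be excluded on the data side (the invariant $|(\bm{z}_i)_k|\neq|(\bm{z}_j)_k|$ propagated from Assumption~\ref{ass:data}, which rules out $\bm{z}_i=\pm\bm{z}_j$). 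So the version of the lemma the paper really needs is the $\bm{a}\neq\pm\bm{b}$ form, and your analysis is exactly what exposes this.
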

Now we turn to the proof of Theorem \ref{thm:deep-non-linear}.

\begin{proof}
	We first consider the sparse network with a dense final layer. When applied to deep sparse networks, we need to show that all hidden layer outputs still satisfy Assumption \ref{ass:data}. 
	Denote certain hidden-layer outputs for each training sample as $\bm{z}_1, \dots, \bm{z}_n$, and $\mathcal{S}_{i j} :=\{\bm{w} | \sigma(\bm{w}^\top \bm{z}_i)^2 = \sigma(\bm{w}^\top \bm{z}_j)^2, \bm{z}_i-\bm{z}_j\neq \bm{0}\}$. 
	From Lemma \ref{lemma:dawei} and note that $\sigma^2(\cdot)$ is still real analytic, then $\mathcal{S}_{i j}$ has zero measure and $\cup_{i\neq j}\mathcal{S}_{i j}$ has zero measure.
	Moreover, from Lemma \ref{lemma:real-ana}, $\cup_{i=1}^n \{\bm{w} | \sigma(\bm{w}^\top \bm{z}_i) = 0, \bm{z}_i\neq \bm{0}\}$ has zero measure.
	Therefore, each hidden-layer output satisfies 1) in Assumption \ref{ass:data} with measure one. Combining with the proof of Theorem \ref{thm:non-linear}, we obtain that each hidden-layer output has full rank with measure one by choosing a square submatrix.
	Therefore, the full rank argument can be applied to the next layer until the last hidden layer. 

	If there exists a spurious valley $\mathcal{T}$ with a interior point $\bm{\theta}$, then from the measure one argument, we can find a point $\bm{\theta}'\in \mathcal{T}$ which is arbitrary close to $\bm{\theta}$ and the last hidden-layer output has full rank.
	Using over-parametrized regime in the last layer, we derive a non-increasing path from $\bm{\theta}'$ to approach zero loss by only adjusting the final layer weight.
	Thus we obtain the contradiction for this spurious valley.
	
	Now if the final layer weight $\widetilde{U}$ is a sparse matrix, we can also rearrange the order to formulate the output as
	\[ \widetilde{U} \sigma\left(\begin{smallmatrix}
	W_1 Z_1 \\ \vdots \\ W_s Z_s \\
	\end{smallmatrix}\right) = \left(\begin{smallmatrix}
	U_1 &  \dots & \bm{0} \\
	\vdots & \ddots & \vdots \\
	\bm{0} &  \dots & U_r    \\
	\end{smallmatrix}\right)\left(\begin{smallmatrix}
	G_1 \\ \vdots \\ G_r \\
	\end{smallmatrix}\right), \]
	where $G_i$s are some rows of the last hidden layer output matrix, and $U_i$s are dense matrices with no less than $n$ columns. 
	Repeating the argument of Theorem \ref{thm:non-linear}, $G_i$s have rank $n$ with measure one. Finally, the loss function is convex for each $U_i$. Thus, we can still construct a non-increasing path to zero loss by Lemma \ref{lemma:convex}.
\end{proof} 

\bibliography{reference}
\bibliographystyle{plainnat}

\end{document}